\pgfplotsset{compat=newest,
every axis/.style={
    grid=major,
    xlabel near ticks,
    ylabel near ticks,
    legend pos=south east,
    cycle list/Set2-8,
    cycle multiindex* list={
    mark list\nextlist
    Set2-8\nextlist
    linestyles\nextlist
    },
    scale only axis,
    every axis plot/.append style={mark size=2pt, ultra thick},
    enlarge y limits=0.025,
    enlarge x limits=0,
    label style={font=\large},
    tick label style={font=\large},
    title style={font=\large},
}
}
\newtheorem{remark}{Remark}
\newcommand\fs@betterruled{%
  \def\@fs@cfont{\bfseries}\let\@fs@capt\floatc@ruled
  \def\@fs@pre{\vspace*{5pt}\hrule height.8pt depth0pt \kern2pt}%
  \def\@fs@post{\kern2pt\hrule\relax}%
  \def\@fs@mid{\kern2pt\hrule\kern2pt}%
  \let\@fs@iftopcapt\iftrue}
\newtheorem{defn}{Definition}
\newtheorem{thm}{Theorem}
\newtheorem{lem}{Lemma}
\providecommand{\TextBurak}[1]{#1}
\def\BibTeX{{\rm B\kern-.05em{\sc i\kern-.025em b}\kern-.08em
    T\kern-.1667em\lower.7ex\hbox{E}\kern-.125emX}}
\DeclareMathOperator*{\argmax}{arg\,max}
\renewcommand{\vec}[1]{\boldsymbol{#1}}
\newcommand{\norm}[1]{\left|\left|#1\right|\right|}
\newcommand{\real}{\mathbb{R}}
\newcommand{\card}[1]{\vert{#1}\vert}
\algnewcommand\algorithmicforeach{\textbf{for each}}
\pgfplotsset{compat=1.17}
\renewcommand{\vec}[1]{\mathbf{#1}}
\newcommand{\bv}{\vec{b}}
\newcommand{\dv}{\vec{d}}
\newcommand{\gpv}{\vec{g_p}}
\newcommand{\mv}{\vec{m}}
\newcommand{\nv}{\vec{n}}
\newcommand{\pv}{\vec{p}}
\newcommand{\rv}{\vec{r}}
\newcommand{\wv}{\vec{w}}
\newcommand{\xv}{\vec{x}}
\newcommand{\yv}{\vec{y}}
\newcommand{\zv}{\vec{z}}
\newcommand{\Mm}{\vec{M}}
\newcommand{\Pm}{\vec{P}}
\newcommand{\Qm}{\vec{Q}}
\newcommand{\Rm}{\vec{R}}
\newcommand{\Dc}{{\cal D}}
\newcommand{\Nc}{{\cal N}}
\newcommand{\Pc}{{\cal P}}
\newcommand{\CC}{\mathbb{C}}
\newcommand{\NN}{\mathbb{N}}
\newcommand{\RR}{\mathbb{R}}
\newcommand{\LB}{\left(}
\newcommand{\RB}{\right)}
\newcommand{\LP}{\left\{}
\newcommand{\RP}{\right\}}
\newcommand{\LSB}{\left[}
\newcommand{\RSB}{\right]}
\renewcommand{\log}[1]{\mathop{\mathrm{log}}\LB #1\RB}
\newcommand{\EE}{{\mathbb{E}}}
\newcommand{\Expect}[2]{\EE_{#1}\LSB #2\RSB}
\newcommand{\Dcval}{\Dc_\mathrm{val}}
\newcommand{\removed}[1]{}
\newacronym{ACM}{ACM}{adaptive coding and modulation}
\newacronym{ADC}{ADC}{analog-to-digital conversion}
\newacronym{AGC}{AGC}{automatic gain control}
\newacronym{AWGN}{AWGN}{additive white Gaussian noise}
\newacronym{BER}{BER}{bit error rate}
\newacronym{BLER}{BLER}{block error rate}
\newacronym{BP}{BP}{backpropagation}
\newacronym{BPTT}{BPTT}{backpropagation through time}
\newacronym{CE}{CE}{cross-entropy}
\newacronym{CFO}{CFO}{carrier frequency offset}
\newacronym{CSI}{CSI}{channel state information}
\newacronym{DAC}{DAC}{digital-to-analog conversion}
\newacronym{DL}{DL}{deep learning}
\newacronym{DFT}{DFT}{discrete Fourier transform}
\newacronym{FFT}{FFT}{fast Fourier transform}
\newacronym{GAN}{GAN}{generative adversarial network}
\newacronym{GRU}{GRU}{gated recurrent unit}
\newacronym{iid}{i.i.d.\@}{independent and identically distributed}
\newacronym{IFFT}{IFFT}{inverse fast Fourier transform}
\newacronym{KL}{KL}{Kullback-Leibler}
\newacronym{LSTM}{LSTM}{long short-term memory}
\newacronym{MDP}{MDP}{Markov decision process}
\newacronym{ML}{ML}{machine learning}
\newacronym{MLP}{MLP}{multilayer perceptron}
\newacronym{MIMO}{MIMO}{multiple-input multiple-output}
\newacronym{MSE}{MSE}{mean squared error}
\newacronym{NN}{NN}{neural network}
\newacronym{DNN}{DNN}{deep neural network}
\newacronym{OFDM}{OFDM}{orthogonal frequency-division multiplexing}
\newacronym{pdf}{pdf}{probability density function}
\newacronym{pmf}{pmf}{probability mass function}
\newacronym{PSNR}{PSNR}{peak signal to noise ratio}
\newacronym{RBF}{RBF}{Rayleigh block-fading}
\newacronym{ReLU}{ReLU}{rectified linear unit}
\newacronym{RL}{RL}{reinforcement learning}
\newacronym{RNN}{RNN}{recurrent neural network}
\newacronym{SFO}{SFO}{sampling frequency offset}
\newacronym{SNR}{SNR}{signal-to-noise ratio}
\newacronym{SINR}{SINR}{signal-to-interference-plus-noise ratio}
\newacronym{SGD}{SGD}{stochastic gradient descent}
\newacronym{wrt}{w.r.t.\@}{with respect to}
\newacronym{OAC}{OAC}{over-the-air computation}
\newacronym{MAC}{MAC}{multiple access channel}
\newacronym{SIC}{SIC}{successive interference cancellation}
\newacronym{TDMA}{TDMA}{time division multiple access}
\newacronym{NOMA}{NOMA}{non-orthogonal multiple access}
\newacronym{CL}{CL}{curriculum learning}
\newacronym{JSCC}{JSCC}{joint source-channel coding}
\newacronym{DeepJSCC}{DeepJSCC}{deep joint source-channel coding}
\newacronym{MTL}{MTL}{multi-task learning}
\newacronym{MIL}{MIL}{multi-instance learning}
\newacronym{DML}{DML}{deep metric learning}
\newacronym{IoT}{IoT}{Internet of Things}
\newacronym{SSIM}{SSIM}{structural similarity index measure}
\newacronym{MS-SSIM}{MS-SSIM}{multi-scale \gls{SSIM}}
\newacronym{DDPM}{DDPM}{denoising diffusion probabilistic models}
\newacronym{MVL}{MVL}{multi-view learning}
\newacronym{CNN}{CNN}{convolutional neural network}
\newacronym{LPIPS}{LPIPS}{learned perceptual image patch similarity}
\newacronym{BPG}{BPG}{Better Portable Graphics}
\newacronym{IoE}{IoE}{Internet of Everything}
\newacronym{V2X}{V2X}{vehicle-to-everything}
\newacronym{AR/VR}{AR/VR}{augmented/virtual reality}
\newacronym{DSC}{DSC}{distributed source coding}
\newacronym{ANN}{ANN}{artificial neural networks}
\newacronym{BCR}{BCR}{bandwidth compression ratio}
\newacronym{CIS}{CIS}{central inference server}
\newacronym{MV-OAC}{MV-OAC}{majority voting with \gls{OAC}}
\newacronym{BA-OAC}{BA-OAC}{belief averaging with \gls{OAC}}
\newacronym{WBA-OAC}{WBA-OAC}{weighted belief averaging with \gls{OAC}}
\newacronym{MV-Orth}{MV-Orth}{orthogonal majority voting}
\newacronym{BA-Orth}{BA}{orthogonal belief averaging}
\newacronym{WBA-Orth}{WBA-Orth}{orthogonal weighted belief averaging}
\newacronym{CD}{CD}{critical distance}
\newacronym{DP}{DP}{differential privacy}
\newacronym{RR}{RR}{randomized response}
\newacronym{FL}{FL}{federated learning}
\begin{document}

\title{Private Collaborative Edge Inference\\ via Over-the-Air Computation
\thanks{
This work extends from our preliminary study presented at the 2022 IEEE International Symposium on Information Theory (ISIT)~\cite{yilmaz2022over}.

S.\ F.\ Yilmaz and D.\ Gündüz are with Department of Electrical and Electronic Engineering, Imperial College London, United Kingdom. Email: \{s.yilmaz21, d.gunduz\}@imperial.ac.uk.

B.\ Hasırcıoğlu is with The Alan Turing Institute, United Kingdom. This work was carried out when he was a Ph.D.\ student at the Information Processing and Communications Lab (IPC Lab) in the Department of Electrical and Electronic Engineering, Imperial College London, United Kingdom. Email: burakhasircioglu@gmail.com.

L.\ Qiao is with the School of Information
and Electronics, Beijing Institute of Technology, Beijing 100081, China. This work was carried out when he was a visiting student at the Information Processing and Communications Lab (IPC Lab) in the Department of Electrical and Electronic Engineering, Imperial College London, United Kingdom. Email: qiaoli@bit.edu.cn.

The present work has received funding from the European Union’s Horizon 2020 Marie Skłodowska Curie Innovative Training Network Greenedge (GA. No. 953775). Horizon Europe SNS project "6G-GOALS" (grant 101139232), and from the UK Research and Innovation (UKRI) for the projects Al-R: Artificial Intelligence in the Air (ERC Consolidator Grant, EP/X030806/1) and SONATA: Sustainable Computing and Communication at the Edge (EPSRC-EP/W035960/1). For the purpose of open access, the authors have applied a Creative Commons Attribution (CC BY) license to any Author Accepted Manuscript version arising from this submission.}
}

\author{\IEEEauthorblockN{Selim F. Yilmaz, Burak Hasırcıoğlu, Li Qiao and Deniz Gündüz}}

\maketitle

\begin{abstract}
We consider collaborative inference at the wireless edge, where each client's model is trained independently on its local dataset. Clients are queried in parallel to make an accurate decision collaboratively. In addition to maximizing the inference accuracy, we also want to ensure the privacy of local models. To this end, we leverage the superposition property of the multiple access channel to implement bandwidth-efficient multi-user inference methods. We propose different methods for ensemble and multi-view classification that exploit over-the-air computation (OAC). We show that these schemes perform better than their orthogonal counterparts with statistically significant differences while using fewer resources and providing privacy guarantees. We also provide experimental results verifying the benefits of the proposed OAC approach to multi-user inference, and perform an ablation study to demonstrate the effectiveness of our design choices. We share the source code of the framework publicly on Github to facilitate further research and reproducibility.
\end{abstract}

\begin{IEEEkeywords}
Edge inference, collaborative inference, distributed inference, ensemble, multi-view, over-the-air computation (OAC), wireless communications, differential privacy, multi-class classification, privacy-utility trade-off.
\end{IEEEkeywords}

\section{Introduction}
The increasing adoption of \gls{IoT} devices results in the collection and processing of massive amounts of mobile data at the wireless edge. Conventional centralized machine learning (ML) methods are impractical for edge applications due to privacy concerns and limited communication resources. Implementing decentralized ML models at the edge solves this issue, and thus, {\it edge learning} and {\it edge inference} have attracted significant attention over the recent years~\cite{chen2021distributed,wu2019machine,zhou2019edge,lan2021progressive,gunduz2020communicate}. Edge learning aims to train large ML models in a distributed setting, whereas edge inference aims to make inferences in a distributed manner at the edge. 

Although collaborative training (e.g., \gls{FL}) at the edge can bring significant advantages, it requires coordination and communication across nodes. Moreover, limited wireless resources are a major bottleneck, and noise, interference, and lack of accurate channel state information can prevent or slow down the convergence of learning algorithms, or result in reduced accuracy~\cite{chen2021guest}. Therefore, in this paper, we consider collaborative inference using independently trained models at the edge nodes. While a growing body of work studies distributed training over wireless networks, the literature on distributed wireless inference, particularly using deep learning techniques, is relatively limited~\cite{jankowski2020wireless,jankowski2020joint,shao2021learning}. Moreover, beyond a few exceptions, such physical factors affecting edge inference have not been explored in the literature~\cite{chen2021distributed}.

We treat the resultant problem as a collaborative edge inference problem, where the individual hypotheses of the clients need to be conveyed to the \gls{CIS}, and combined for the most accurate decision. Moving sensing to the edge reduces latency and improves privacy since it removes the requirement to communicate the data or the models. Often the data is not accumulated at a single client or cannot be shared between clients due to privacy or latency constraints. When data offloading or collaborative training are not possible, locally trained models are suboptimal and a collaborative inference solution is needed that incorporates the decisions at different clients. We, therefore, introduce a distributed inference solution that incorporates all the data and models while limiting privacy leakage during inference time through \gls{DP} guarantees.

Privacy is an important concern in all ML applications since the data used for training can reveal sensitive information about its owner. In the case of ensemble inference, when the models are queried, their outputs may reveal sensitive information about their training sets. For instance, even when an adversary has black-box access to a model, whether or not a data sample is used during training can be inferred via membership inference attacks~\cite{shokri2017membership}, or even the whole model can be reconstructed via model inversion attacks~\cite{tramer2016stealing}. Hence, even if adversaries can only observe the inference results, we need to introduce some additional mechanisms to protect the sensitive information. \Cref{fig:general_scheme} summarizes the architecture for collaborative inference with \gls{DP} guarantees.

\Gls{DP} guarantees can be obtained by introducing additional randomness to the output, such as adding noise, at the expense of some accuracy loss. Since \gls{DP} bounds the amount of information leaked about the individuals, \gls{DP} mechanisms make black-box attacks less effective. One approach to provide \gls{DP} guarantees to ML is differentially private training~\cite{abadi2016deep}. Typically, Gaussian noise is added to the gradients during training, where the noise variance is determined according to the desired privacy level. This approach is extended to a federated setting in~\cite{geyer2017differentially,malekzadeh2021dopamine}.
However, \gls{DP} training provides only a fixed \gls{DP} guarantee, and we cannot operate at different privacy-utility trade-offs during inference, which may be beneficial when serving users with different levels of trustworthiness. Moreover, \gls{DP} training does not prevent the model stealing attacks since the model can be still reconstructed via black-box access to it. Hence, in this paper, we focus on embedding privacy-preserving mechanisms into the inference phase. We simply lift \gls{DP} training assumption on the models and provide a systematic method for controllable privacy-utility trade-off for distributed inference.

\begin{figure}[t!]
    \centering
    \includegraphics[width=\columnwidth]{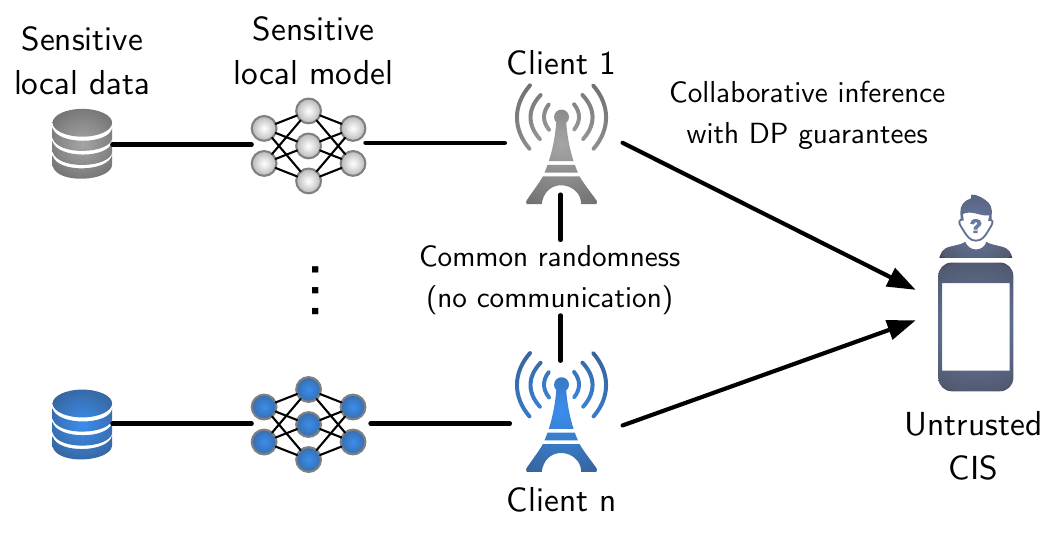}
    \caption{Overview of collaborative inference with \gls{DP} guarantees.}
    \label{fig:general_scheme}
\end{figure}

In a recent line of work~\cite{amiri2020machine,amiri2020federated,zhu2019broadband,yilmaz2023distributed}, it has been shown that, in distributed training tasks, superposition property of the \gls{MAC} can be exploited to use communication resources much more efficiently and to significantly improve the learning and transmission performance. Instead of conventional digital communication, in \gls{OAC}, clients transmit their updates simultaneously in an uncoded manner such that the receiver automatically gets the aggregated signal via the \gls{MAC} (please refer to~\cite{csahin2023survey} for a comprehensive overview of \gls{OAC} techniques and applications).
Hence, besides communication efficiency, \gls{OAC} also helps preserve the privacy of the clients since individual signals transmitted by the clients are not observed by the receiver~\cite{hasirciouglu2021private}. Moreover, the noise and channel fading acting on the aggregated signal at the receiver can be effective at preserving the privacy of all the signals transmitted by the clients~\cite{seif2020wireless, liu2020privacy, sonee2020efficient, seif2021privacy, hasirciouglu2021private}.
In this work, we extend the use of \gls{OAC} beyond distributed training and exploit it for efficient and private distributed edge inference. 


\subsection{Contributions}
The main contributions of our work are summarized as follows:
\begin{enumerate}
    \item We introduce a novel distributed inference framework at the wireless edge that exploits \gls{OAC} while providing bandwidth efficiency and variability. To the best of our knowledge, the conference version of this paper \cite{yilmaz2022over} was the first in the literature to exploit OAC for distributed inference.
    \item We provide flexible privacy guarantees depending on the scenario without imposing any restrictions on the training phase.
    \item We systematically compare and discuss the privacy of the introduced collaborative classification methods, and show that the proposed framework with \gls{OAC} performs significantly better than its orthogonal and digital counterparts while using fewer wireless resources under privacy constraints. We also provide statistical significance tests to show the reliability of the obtained results.
    \item To facilitate further research and reproducibility, we publicly share the source code of our framework on \href{https://github.com/ipc-lab/collaborative-inference-oac}{github.com/ipc-lab/collaborative-inference-oac}.
\end{enumerate}

\subsection{Organization of This Article}
The rest of the article is organized as follows. In \cref{sec:related_works}, we describe related works from wireless communications and machine learning literature. In \cref{sec:system_model}, we define distributed inference problem over a \gls{MAC} and describe our threat model and target problem. In \cref{sec:methodology}, we introduce our novel distributed inference method for the wireless edge that exploits the superposition property of \gls{MAC}. In \cref{sec:numerical_results}, we demonstrate the performance and bandwidth gains of our introduced methodology. We conclude our work in \cref{sec:conclusion}.
\section{Related Work}
\label{sec:related_works}



\subsection{Distributed Inference at the Edge}

For \gls{IoT} applications, moving inference from cloud to edge can be desirable or needed in both static and dynamic environments \cite{ chen2021distributed}. Static environments, e.g., surveillance cameras or sensor networks, might desire edge inference due to latency and privacy constraints~\cite{shlezinger2022collaborative, hu2019deephome}. Dynamic environments are even more challenging with the same demands due to their high mobility. In both static and dynamic environments, edge devices can even generate data faster than their upload speed to the cloud, which makes it impossible to employ a cloud-centric solution while utilizing all the data~\cite{chen2021distributed}.

Edge inference in dynamic environments, such as the ones including self-driving cars, drones and mobile phones, might require adaptive solutions due to latency, privacy, and accuracy constraints \cite{zhou2019edge,chen2021distributed,Cui2023RecUP-FL:,Yang2023Energy-Efficient}. In a collaborative system, client participation depends on the privacy, security, energy and utility requirements~\cite{malik2023collaborative,wang2018privacy}. Clients may choose not to participate due to various reasons such as (1) hardware failures, (2) to reduce or balance the power consumption, (3) to prevent accuracy drop of the \gls{CIS} due to noisy data receipt or unconfident local prediction, (4) to prevent leakage to possible eavesdroppers, (5) to prevent transmission over a low-quality wireless channel, and (6) to improve \gls{DP} guarantees. In this study, we focus on amplifying \gls{DP} guarantees by random participation.


\subsection{OAC for FL}

\Gls{FL}~\cite{konevcny2016federated} has found applications in many areas including, but not limited to, finance~\cite{long2020federated}, medicine~\cite{nguyen2021federated, shiri2023decentralized} and wireless communications~\cite{FrancescWilhelmi2022,mashhadi2021federated,mashhadi2021fedrec}. In \gls{FL}, multiple devices are coordinated by a central server to train a shared neural network model. \Gls{FL} lifts the requirement to communicate data samples, i.e., the data stays at local devices, and allows collaborative training to benefit from distributed local data samples. \gls{OAC} has been shown to have a considerable amplification effect on \gls{DP} thanks to the natural aggregation of multiple model updates and the channel noise over the wireless medium. Such an amplification effect of \gls{OAC} has been studied previously in several works providing better convergence, privacy and accuracy for \gls{FL}~\cite{seif2020wireless, liu2020privacy, sonee2020efficient, seif2021privacy, hasirciouglu2021private}. However, this framework assumes model training among physically colocated nodes, which may limit its practical applicability since such collaborative training may require an enormous amount of communication and computation resources. Instead, in this paper, we focus on edge inference with relatively simpler models that are trained independently.

\subsection{OAC for Edge Inference}
Following our initial work~\cite{yilmaz2022over}, \gls{OAC} has been adopted to multi-device edge inference in several studies~\cite{Kaibin2023Task,zhuang2023integrated,WU2023features,LIU2023over}. Specifically, in~\cite{Kaibin2023Task,zhuang2023integrated,WU2023features}, feature vectors are extracted at multiple local devices and transmitted via \gls{OAC}, then the neural network employed at the server performs inference based on the superposed feature vectors. Due to observations from multiple devices, the server can achieve better inference results. Moreover, power allocation, transmit precoding, and receive beamforming are optimized to maximize the inference-oriented criterion in~\cite{Kaibin2023Task} and~\cite{zhuang2023integrated}. In addition, contrastive learning has been used to exploit feature correlations among devices to maximize the image retrieval accuracy in~\cite{WU2023features}. The authors of~\cite{LIU2023over} proposed an \gls{OAC}-based multi-view pooling technique to improve classification accuracy with less communication latency. However, to the best of our knowledge, only~\cite{yilmaz2022over} and the current paper consider the privacy aspects of distributed edge inference.




\subsection{Ensemble and Multi-View Classification}
Ensemble learning methods combine multiple hypotheses instead of constructing a single best hypothesis to model the data~\cite{dietterich2002ensemble}. In ensemble learning, each hypothesis votes for the final decision, where votes can have weights depending on their confidence. It is generally intractable to find the optimal hypothesis, and choosing a model among a set of equally good models has the risk of choosing the model that has worse generalization performance; however, averaging these models would reduce this risk~\cite{dietterich2002ensemble,bishop1995neural}. Furthermore, weighted or voting-based ensemble methods have theoretical guarantees; for example, it can be shown that the expected error of an averaging ensemble of models is not greater than the average of the expected errors of individual models with a mean square objective~\cite{bishop1995neural}.

Ensemble methods achieve higher accuracy when local models are diverse~\cite{sagi2018ensemble}. Our setting is similar to \textit{dagging}~\cite{ting1997stacking}, where individual models of the ensemble are trained on disjoint datasets. Dagging is a very similar technique to the well-known \textit{bagging}~\cite{breiman1996bagging}, with the difference in disjoint training subset sampling like ours instead of sampling with replacement. We employ models trained on disjoint datasets due to the nature of our target problem, which improves the diversity and privacy of the local models.

Multi-view classification deals with multiple feature perspectives, e.g., different views of the same scene, while ensemble classification performs classification on the same whole instance. Both approaches aim to improve the classification performance. In multi-view classification, the inputs are often correlated, and therefore the aim is to utilize these correlations among different views~\cite{shi2022multi}. To fuse information from other modalities or views, data-level, feature-level or decision-level fusion techniques have been commonly employed~\cite{mandira2019spatiotemporal,giritliouglu2021multimodal}. Among them, decision-level fusion does not require any data or feature sharing between models, and therefore improves latency and privacy. For this purpose, decision-level fusion is a commonly employed technique while benefiting from the local models or datasets of individual clients.

Averaging, weighted averaging and voting are common methods for decision-level fusion~\cite{dietterich2002ensemble}. Weighted averaging is more general than simple averaging and voting as these methods are special cases of weighted averaging. Various empirical studies show that simple averaging is not worse than weighted averaging~\cite{xu1992methods,ho1994decision}. Yet, in general, weighted averaging is a better choice when individual models have significantly different performances on the target task~\cite{Zhou2021}. Simple averaging can be a better choice when individual models have similar performances since weights need to be determined, and they may not be reliable due to noise and insufficient amount of data~\cite{Zhou2021}.

\section{System Model and Problem Definition}
\label{sec:system_model}
\subsection{Notation}
Unless stated otherwise; boldface lowercase letters denote vectors (e.g., $\pv$), boldface uppercase letters denote matrices (e.g., $\Pm$), non-boldface letters denote scalars (e.g., $p$ or $P$), and uppercase calligraphic letters denote sets (e.g., $\Pc$). Blackboard bold letters denote function domains (e.g., $\mathbb{P}$). $\RR$, $\NN$, $\CC$ denote the set of real, natural and complex numbers, respectively. $\card{\Pc}$ denotes the cardinality of set $\Pc$. We define $[n]\triangleq\{1,2,\cdots,n\}$, where $n\in\NN^+$.

\subsection{System Model}
We consider privacy-preserving multi-user classification at the wireless edge for both ensemble and multi-view cases. In our model, we consider $n$ clients each with a separate model $f_i:\RR^l\to \RR^k,i\in[n]$, trained on dataset $\Dc_i \subset \Dc$ for either a multi-class or a binary classification task, where $\Dc = \bigcup_{i \in \LSB n \RSB} \Dc_i$, and $\Dc_i\cap \Dc_j=\emptyset$ for clients $i \neq j$. Each client also has access to a common validation dataset $\Dcval$.

A new query to be classified arrives at each timestep $t$, and all clients receive a view of the query, $\xv_{i,t} \in \RR^l$.
Then, each participating client makes an inference with its local model denoted by $f_i(\xv_{i,t}) \in \RR^k$ and then encodes it into $d$ channel symbols, which is the total available bandwidth per query.

\begin{remark}
The target task becomes ensemble classification when all devices receive the same view $\xv_t$ as query, i.e., $\xv_{1,t} = \cdots = \xv_{n,t} = \xv_t$.
\end{remark}

The clients are connected to a \gls{CIS} via a wireless medium, and, at time $t$, we assume each client $i$ knows its channel gain to the \gls{CIS}, $h_{i,t}\in\RR$. In our setting, we assume that $h_{i,t}$'s are normally distributed with a zero mean and a variance $\sigma^2_h$. We further assume that the channel gains change across clients and time steps, but they stay the same per inference round; that is, the channel gain of a client is the same for all $d$ channel uses.
To reduce the total power consumption and to amplify the privacy guarantees, we consider random participation of the clients in each inference round such that each client $i$ independently participates with probability $p$. To limit the power consumption, only the clients whose channel gains are larger than a certain threshold participate in the inference process. This is one of the sources of randomness determining $p$. Hence, $p$ is a tunable parameter via such a transmission threshold. If necessary, via additional randomness, $p$ can be made even smaller.

Let $\vec{y}_{i,t}\in\RR^d$ denote the signal transmitted by client $i$. The received signal at \gls{CIS} is 
\begin{equation}
\vec{z}_t = \sum_{i \in \mathcal{P}_t}  h_{i,t} \vec{y}_{i,t} +  \nv_t,
\label{eq:channel_model}
\end{equation}
where $\nv_t \in \RR^d$ is the \gls{iid} \gls{AWGN} with zero mean and variance $\sigma^{2}_n$, i.e., $\nv_t \sim \mathcal{N}(\vec{0}, \sigma^2_n \vec{I}_k)$. We enforce an average power constraint $P$, which is denoted by the following:
\begin{align}
\Expect{}{\norm{\yv_{i,t}}_2^2} \leq P.
\label{eq:system_power_constraint}
\end{align}

After receiving $\vec{z}_t \in \RR^d$, \gls{CIS} processes it via function $s:\RR^d\to[k]$ and outputs the most probable class. The system model is illustrated in \cref{fig:main_figure}.

\subsection{Threat Model and Target Problem}
In our problem, the purpose is to limit the privacy leakage of clients' local models, which is equivalent to limiting the leakage about the individual datasets $\mathcal{D}_i, \forall i \in [n]$ since local models are trained on disjoint datasets.
In our threat model, we assume all the clients follow the protocol, and the \gls{CIS} is honest but curious, i.e., it does not deviate from the protocol, but by exploiting the signals received from the clients, it may try to infer sensitive information about the clients' models and datasets.  Hence, our goal is to limit the leakage to \gls{CIS} about the local models from $\vec{z}_t$ while trying to maximize the inference accuracy.


\begin{remark}
    Although our channel model is defined in real numbers $\RR$ for simplicity, it can be extended to complex channels via mapping half of the vectors or function outputs to the real part and the rest to the imaginary part of a complex number.
\end{remark}
\section{Methodology}\label{sec:methodology}
In this section, we gradually introduce the modules of our framework.

\begin{figure}[t!]
    \centering
    \includegraphics[width=\columnwidth]{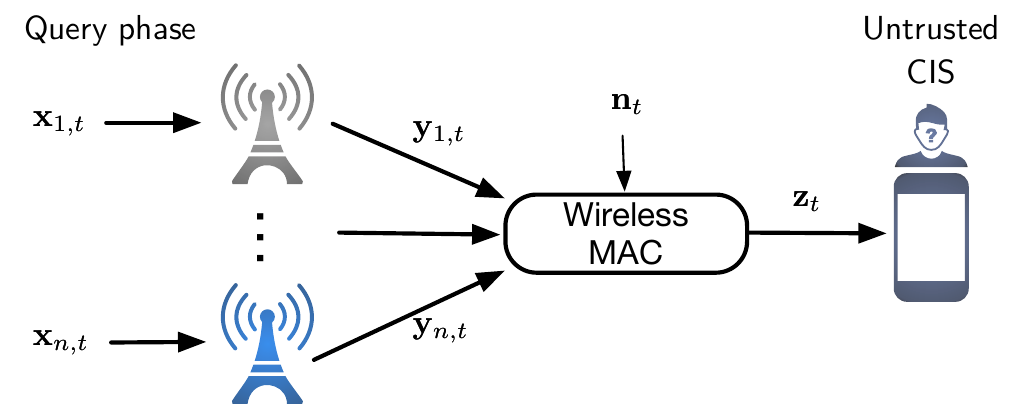}
    \caption{Overview of our framework for collaborative private inference at the wireless edge. 
    }
    \label{fig:main_figure}
\end{figure}

\subsection{Fusion Methods}
In the following, we present alternative methods to perform local prediction for participating clients, having received the query $\xv_{i,t}$. 

Let $\rv_{i,t} \in \real^k$ be a vector containing classifier scores (beliefs) for each class at client $i$ for the sample received at time slot $t$, where $k$ is the number of classes, and the $j^\mathrm{th}$ element of $\rv_{i,t}$, denoted by $(\rv_{i,t})_j$, contains the score for class $j$. We normalize the sum of the scores in $\rv_{i,t}$ to 1, i.e., $\norm{\rv_{i,t}}_1 = 1$, and hence, the maximum possible score of a class is 1.

We now present three different models to be used as $f_i$: {\it \gls{BA-OAC}}, {\it \gls{WBA-OAC}}, {\it \gls{MV-OAC}}.

{\it \Gls{BA-OAC}} method averages beliefs of the participating clients for all the classes, and the \gls{CIS} selects the class with the highest total score. Thus, it uses the following model for client $i$:
\begin{equation}
    \tilde{f}_i (\xv_{i,t}) = \rv_{i,t}.
\end{equation}

{\it \Gls{WBA-OAC}} method employs normalized weighted average of the beliefs of the participating clients by class-wise accuracy. Then, similarly to {\it \gls{BA-OAC}}, the \gls{CIS} selects the class with the highest total score. Therefore, it uses the following function at every client $i$:
\begin{equation}
    \tilde{f}_i (\xv_{i,t}) =  \wv_i \odot \rv_{i,t},
\end{equation}
where $\wv_i \in \RR^k$ is a vector containing normalized per-class accuracy on $\Dcval$ for client $i$ satisfying $\norm{\wv_i}_1=1$, and $\odot$ denotes the element-wise multiplication. If real-time validation data or test data are available and do not violate privacy constraints, these can be used to update the weights accordingly instead of $\Dcval$.

\begin{defn}
$\mathrm{OneHot}(j, l)$ function outputs an $l$-dimensional one-hot vector for $j \leq l$, where only the $j^\mathrm{th}$ dimension is $1$ and the rest are $0$.
\end{defn}

{\it \Gls{MV-OAC}} method allows participating clients to vote for a class and the \gls{CIS} selects the class with the highest number of votes. Hence, at client $i$, we have
\begin{equation}
    \tilde{f}_i (\xv_{i,t}) = \mathrm{OneHot} \left( \argmax_{j\in[k]} (\rv_{i,t})_j, k \right).
\end{equation}

While {\it \gls{BA-OAC}} and {\it \gls{WBA-OAC}} combine local discriminative scores,  {\it \gls{MV-OAC}} combines predicted labels. We then apply mean centering by subtracting $\tilde{f}_i ( \xv_{i,t} )$'s mean, which is $ \LB\hat{f}_i ( \xv_{i,t} )\RB_j = \frac{1}{k}$, $\forall j \in [k]$, for {\it \gls{MV-OAC}}, i.e.,
\begin{equation}
    f_i \LB \xv_{i,t} \RB = \tilde{f}_i ( \xv_{i,t} ) - \hat{f}_i ( \xv_{i,t} ).
    \label{eq:power_shift}
\end{equation}
Therefore, when transmitted over the channel, $f_i \LB \xv_{i,t} \RB$ uses less transmission power for {\it \gls{MV-OAC}}, i.e., $\norm{f_i \LB \xv_{i,t} \RB}_2^2 < \norm{\tilde{f}_i ( \xv_{i,t} )}_2^2$.




\subsection{Ensuring Privacy}\label{subsec:ensuring-privacy}
Next, we explain how we make our inference procedure privacy-preserving by introducing some randomness. First, we formally define DP for our ensemble inference task.

Let $\mathcal{L}, \mathcal{L}' \in \mathbb{L}$ be the sets of local models of the clients, which differ at most in one of the clients, i.e., $\mathcal{L}=\{f_j\}\cup \{f_i : i\in[n]\setminus j\}$ and $\mathcal{L}'=\{f_j'\}\cup \{f_i : i\in[n]\setminus j\}$ such that $f_j\neq f_j'$ and $\mathbb{L}$ is the set of all possible local models of $n$ clients. Such $\mathcal{L}$ and $\mathcal{L}'$ are called \emph{neighbouring} sets. In our case, since we aim to protect the local models' privacy against CIS, we require that the CIS cannot distinguish between $\mathcal{L}$ and $\mathcal{L}'$ by observing $\zv_t$. Hence, the DP guarantees considered in the paper are all local-model-level DP, i.e., the replacement of one client's local model with another model does not generate a distinguishable effect on the observation of the CIS.


Local-model-level privacy guarantees might seem too conservative, but 
besides protecting against inferring sensitive features of the local datasets, local-model-level privacy provides protection against model stealing attacks, i.e., the \gls{CIS} will not be able to accurately reconstruct the local model parameters.


Next, we formally define the notion of DP which characterizes the indistinguishability of $\zv_t$ against the local model sets $\mathcal{L}$ and $\mathcal{L}'$. For this, for a clearer notation, let us consider the signal observed by the CIS, $\zv_t$, as the output of a randomized function $M(\boldsymbol{X}_t,\mathcal{L})$, which represents all the local prediction, randomization and transmission phases, where $\boldsymbol{X}_t\in \real^{l\times n}$ is the matrix whose $i^{th}$ column represents the query received by client $i$.

\begin{defn}
Consider the signal observed by the CIS, $\zv_t$, as the output of a randomized function $M:\real^{l\times n}\times\mathbb{L}\to \real^d$  and let $\mathcal{L}$ and $\mathcal{L}'$ be two possible neighbouring model sets. For $\varepsilon>0$ and $\delta \in [0,1)$, $M$ is called $(\varepsilon,\delta)$-DP if
\begin{equation}
    \Pr(M(\boldsymbol{X}_t,\mathcal{L})\in \mathcal{R}) \leq e^\varepsilon \Pr(M(\boldsymbol{X}_t, \mathcal{L}')\in \mathcal{R}) + \delta,
\end{equation}
for all neighboring pairs $(\mathcal{L},\mathcal{L}')$, $\forall \boldsymbol{X}_t \in \real^{l\times n}$  and $\forall$ $\mathcal{R}\subset \mathbb{R}^d$.
\end{defn}


In the above definition, $\varepsilon$ indicates the amount of privacy loss and hence, smaller $\varepsilon$ implies a stronger privacy guarantee. On the other hand, $\delta$ characterizes the failure probability of this guarantee, i.e., bad events in which $\Pr(M(\mathcal{L})\in \mathcal{R}) \leq e^\varepsilon \Pr(M(\mathcal{L}')\in \mathcal{R})$ does not hold. Hence, $\delta$ should be close to zero for a strong privacy guarantee. Choosing appropriate values for $\varepsilon$ and $\delta$ is crucial for balancing between privacy and utility, and is still an active area of research. In principle, to have a strong privacy guarantee, choosing $\varepsilon\leq 1$ is preferable. It means that when a private sample is included, which is the private model of a device in our use case, the probability of any output value does not exceed more than $\sim 2.72$ times the probability of the same output value when the sample is not used. However, in practice, depending on the use case, considering privacy vs. utility tradeoff, much larger $\varepsilon$ values would be enough. For example, in \cite{carlini2022membership}, it has been demonstrated that using even a relatively large value of $\varepsilon=8$ can be sufficient to deteriorate the performance of membership inference attacks. Moreover, in \cite{hayes2024bounding}, it has been shown that using $\varepsilon < 100$ would be enough to prevent data reconstruction attacks in MNIST and CIFAR-10 datasets. For further discussions on how $\varepsilon$ and $\delta$ should be chosen in practice, we refer the readers to~\cite{dwork2014algorithmic, steinke2022composition, carlini2022membership, hayes2024bounding, lowy2024does, lee2011much}.

To achieve DP guarantees, the output released to an adversary should be randomized. In our paper, we consider releasing a noisy version of model outputs, $f_i(\xv_{i,t})$, for each client by adding Gaussian noise~\cite{dwork2006calibrating}. Each client can generate a noisy version of its model prediction as follows:
\begin{equation}
    g_i (\xv_{i,t}) = f_i(\xv_{i,t}) + \mv_{i,t},
    \label{eq:gi}
\end{equation}
where $\mv_{i,t} \sim \mathcal{N}(\vec{0}, \sigma^{2}_\mathrm{client} \vec{I}_k)$. One of the main advantages of OAC is that the noise terms added by different clients are automatically aggregated at CIS. Thus, it has a further privacy amplification effect. We provide the analysis of the privacy guarantees achieved by our framework in \cref{sec:privacy-analysis}. This analysis reveals that DP guarantees are directly dependent on the variance of the aggregated noise at the CIS. Hence, to obtain DP guarantees, each client should add a Gaussian noise with $\sigma_{\mathrm{client}}^2=\sigma^2/|\mathcal{P}_t|$, where $\sigma^2$ is a constant depending on the desired DP guarantees \TextBurak{and $\mathcal{P}_t$ is the set of participating clients}. Hence, we assume that the number of participating clients is known by the other participating clients, but is secret from the CIS. 

\begin{remark}
Note that in OAC, $\zv_t$ already has channel noise that provides some degree of privacy guarantees, which can even be amplified by fading~\cite{liu2020privacy}. However, to achieve the desired level of DP, channel noise power may not be sufficient. First of all, channel noise may not follow a Gaussian distribution in practice (which is often used as worst-case assumption for channel capacity). More importantly, the clients cannot control or reliably know the noise variance or channel gain, as they rely on the CIS for this information. Thus, it is not a reliable source of randomness~\cite{hasirciouglu2021private}, and we ignore the channel noise and fading while analysing privacy guarantees. Instead, we have each client add some additional Gaussian noise before releasing their contributions. Hence, in reality, the privacy guarantees are slightly better than the ones we obtain in this work due to the presence of additional channel noise. If we were not ignoring the channel noise, each client would need to add less noise to the decision vectors, i.e., noise variance of $(\sigma^2 - \sigma_n^2)/|\mathcal{P}_t|$ would suffice instead of $\sigma^2/|\mathcal{P}_t|$. 
\end{remark}

We discuss the \gls{DP} guarantees of our method in \cref{sec:privacy-analysis}. In the next section, we describe the transmission method of the clients.

\subsection{Transmission}
\label{sec:methodology_transmission}
Before transmission, all the clients check whether they participate or not at time $t$. That is, each client participates with probability $p$ independent of other clients. If no client participates at time $t$, which has a very low probability, then the clients repeat the coin tosses for their participation until at least one client participates, i.e., until $\card{\Pc_t} > 0$. Therefore, each client needs to know the number of participating clients at time $t$, i.e., $\card{\Pc_t}$. Clients can decide on the number of participating clients and their identities via a procedure that is unknown to the \gls{CIS}. Such a procedure can be executed without communication by leveraging either the randomness in channel gains or common randomness, with client selection determined by a pseudo-random variable derived from a shared seed.

Furthermore, we use linear projection before transmission to adapt to the available bandwidth $d$. When the number of classes $k$ is smaller than $d$, it is reasonable to employ the whole bandwidth to improve the transmission quality against channel noise. However, the number of classes $k$ can be much larger than $d$, e.g., the number of labels ranges from thousands to billions in extreme classification problems~\cite{choromanska2013extreme,liu2017deep}. In this case, to enable latency critical applications~\cite{schulz2017latency}, it is better to project the decision vector $g_i(\xv_{i,t})\in \RR^{k}$ to a $d$-dimensional space to reduce the bandwidth, albeit with lower reliability. Therefore, for more generality, we adopt a linear projection of the decision vector $g_i(\xv_{i,t})\in \RR^{k}$ to dimension $d$.

Our objective is to recover the superposition of projected signals at the \gls{CIS} with the least error possible, while we can expand or collapse the bandwidth. In particular, we employ the same projection matrix $\Pm \in \RR^{d \times k}$ at all the clients to produce $d$-dimensional transmit vectors. To generate $\Pm$, we first sample $\Mm \sim \Nc \LB \mathbf{0}_c, \mathbf{I}_c \RB$, where $c=\LP d, k \RP$. Then, we apply QR factorization to $\Mm$ using the procedure in~\cite{mezzadri2006generate} to generate an orthogonal matrix $\Qm$ as:
\begin{align*}
    \Qm^{'}, \Rm &= \mathrm{QRFactorization} \LB \Mm \RB,\\
    \dv &= \mathrm{Sign} \LB \mathrm{Diagonal} \LB \Rm \RB \RB,\\
    \Qm &= \dv \cdot \Qm^{'},
\end{align*}
where $\mathrm{Diagonal}$ extracts the diagonal elements of the input matrix as a vector, and $\mathrm{Sign}$ replaces negative elements with $-1$ and positive elements with $+1$.
Then, we select the first $d$ rows and $k$ columns as $\Pm = \Qm_{1:d,1:k}$. Since the \gls{CIS} is only interested in the final decision, it only needs to pinpoint the maximum value of the superposed decision vector rather than estimating the entire decision vector. In addition, most of the clients may make the same decision; hence, it is feasible to identify the maximum value even when $d < k$. 

\begin{remark}
    Any orthogonal matrix can be used instead of $\Qm$. When $d < k$, $\Pm$ can also be designed as a matrix. e.g., a Gaussian or Bernoulli random matrix, that satisfies the restricted isometry property of compressed sensing~\cite{Donoho}.
\end{remark}

We need to make sure that each client's noisy decision vector $g_i(\xv_{i,t})$ is received at the \gls{CIS} at the same power level. Recall that the channel gain for each client is assumed to be perfectly known by that client, which then employs channel inversion to cancel its effect. Thus, each client scales its signal by $1/h_{i,t}$ prior to transmission. Note that, since a client does not participate in the inference process if its channel has a low gain, this scaling does not result in excessive power usage. The \gls{CIS} may require a specific power level for the reception of the signals depending on the available power of the clients or the channel's power constraint. Typically, an average power constraint is imposed. We satisfy the average power constraint in \cref{eq:system_power_constraint} by scaling via $\gamma$, whose calculation is provided in \cref{subsec:power_normalization}. Lastly, we normalize by $\card{\Pc_t}$ so that the receiver receives the average of all the users.

After these steps, the signal transmitted by the $i^\mathrm{th}$ client is given by 
\begin{equation}
    \vec{y}_{i,t} =
    \begin{dcases*} 
       \gamma \Pm \frac{ g_i (\xv_{i,t})}{\card{\Pc_t} h_{i,t}}, & if $i\in \mathcal{P}_t$,   \\ 
      \vec{0}, & otherwise,
    \end{dcases*}
    \label{eq:yit}
\end{equation}
where $\Pc_t$ is the set of participating clients.

Now, all participating clients transmit their signals synchronously over a \gls{MAC}. Thanks to the common randomness, it is reasonable to assume that multiple clients are symbol-level synchronized as in~\cite{amiri2020federated} and~\cite{zhu2019broadband}. Hence, their signals are superposed at the \gls{CIS}. Nevertheless, this assumption may not hold in large-scale networks, where achieving synchronization is challenging due to hardware variations, channel conditions, and geographical distances. In asynchronous settings, synchronization errors can lead to significant interference, reducing overall system accuracy. For approaches to mitigate these issues, we refer the reader to~\cite{shao2021federated,csahin2023survey}.


\subsection{Final Decision by the CIS}

The signal received by the \gls{CIS} at time $t$ after passing through the channel defined in \cref{eq:channel_model} is given as follows:
\begin{align}
\zv_t &=  \sum_{i \in \mathcal{P}_t}  h_{i,t} \vec{y}_{i,t} +  \nv_t \nonumber \\
&\stackrel{(a)}{=} \sum_{i \in \mathcal{P}_t}  h_{i,t} \gamma \Pm \frac{ f_i(\xv_{i,t}) + \mv_{i,t} }{\card{\Pc_t} h_{i,t}} +  \nv_t \nonumber \\
&\stackrel{(b)}{=} \frac{ \gamma \Pm }{\card{\Pc_t}}  \sum_{i \in \mathcal{P}_t} \LB f_i(\xv_{i,t}) + \mv_{i,t} \RB +  \nv_t \nonumber \\
&= \frac{\gamma \Pm}{\card{\Pc_t}} \sum_{i \in \mathcal{P}_t} f_i(\xv_{i,t}) + \frac{\gamma \Pm}{\card{\Pc_t}}  \sum_{i\in\mathcal{P}_t}\mv_{i,t} + \nv_t,
\end{align}
where $(a)$ follows from \cref{eq:gi,eq:yit} and $(b)$ follows from the linearity properties of the projection and scalars.
After receiving $\zv_t$, \gls{CIS} decision function $s(\cdot)$ multiplies the received signal by $\frac{1}{\gamma} \Pm^T$ to recover the desired signal, which is the average of votes, local beliefs or weighted local beliefs:
\begin{align}
    \tilde{s} \LB \zv_t \RB &= \frac{1}{\gamma} \Pm^T \zv_{t} \nonumber \\
    &= \frac{\Pm^T \Pm}{\card{\Pc_t}} \sum_{i \in \Pc_t} f_i (\xv_{i,t}) + \bv_t,
\end{align}
where $\bv_t \sim \Nc \LB \vec{0}, \Pm^T \sigma^2_n + \Pm^T \Pm \gamma^2 \sigma^2_\mathrm{client} /\card{\Pc_t} \RB$ is the accumulated noise due to privacy and wireless channel. Note that when $d \geq k$, we have $\Pm^T \Pm = \vec{I}_k$ due to the orthogonality property; and therefore, the \gls{CIS} reconstructs the average of participating clients' score vectors along with some noise.


Then, the \gls{CIS} applies the $\argmax$ function to decide the most probable class, i.e.,
\begin{equation}
    s (\zv_t ) = \argmax_{j\in[k]} \tilde{s} \LB \zv_t \RB_j.
\end{equation}
\Cref{algor:model} summarizes all the steps introduced in this section. In the next section, we derive the scaling factor $\gamma$.

\begin{algorithm}[tb!]
\caption{OAC-Based Private Collaborative Inference}
\label{algor:model}
\begin{algorithmic}
\Require Trained client model $f_i(\cdot)$ for every client $i$, CIS model $s (\cdot)$, new samples $\xv_{i,t}$ $\forall i \in [n]$ at timestep $t$
\Ensure Index of the decided class
\Function{Private\_Collaborative\_Inference}{}
    \State Let $\Pc_t$ contain each client $i$ with probability $p$, determined via a common seed among clients. 
    \ForAll{client $i \in \Pc_t$ in {\it parallel}}
        \State Client $i$ receives $\xv_{i,t}$
        \State Calculate $f_i(\xv_{i,t})$ \Comment{Client Model}
        \State $g_i(\xv_{i,t}) = f_i(\xv_{i,t}) + \mathcal{N}(\vec{0}, \sigma^{2}/|\Pc_t| \vec{I}_k)$ \Comment{Add noise}
        \State Transmit $\gamma \Pm g_i (\xv_t) / \LB \card{\Pc_t} h_{i,t} \RB$
    \EndFor
    \State $\zv_t = \nv_t +  \gamma \sum_{i\in\Pc_t}g_i(\xv_{i,t})$ \Comment{Air Sum}
    \State CIS receives $\zv_t$
    \State \Return $s(\zv_t)$ \Comment{CIS Model}
\EndFunction
\end{algorithmic}
\end{algorithm}

\subsection{Determining the Scaling Factor}
\label{subsec:power_normalization}
Here, we derive the scaling factor $\gamma$ to normalize the prediction vectors with additional noise introduced due to privacy according to the average power constraint defined in \cref{eq:system_power_constraint}. In our setting, each client $i$ wants to transmit $\gpv_i \triangleq \Pm g_i(\xv_{i,t})$ to the CIS. 
For a clearer notation, we drop the time index for the rest of this subsection. To account for channel gains and to satisfy the average power constraint in \cref{eq:system_power_constraint}, before transmitting $\gpv_i$, we scale it by $\frac{\gamma}{|\mathcal{P}_t|h_i}$,
where $\gamma$ is a scalar chosen to satisfy the average power constraint. In the following, we derive it in detail. For $i\in [n]$, we have,
\begin{align}
\Expect{}{\norm{\yv_i}_2^2} = \Expect{}{\norm{\frac{\gamma}{|\mathcal{P}_t|h_i} \gpv_i}_2^2} \leq P.
\end{align}

We have
\begin{align}
   \Expect{}{\norm{\yv_i}_2^2} &=\frac{\gamma^2}{|\mathcal{P}_t|^2} \Expect{}{\norm{\frac{1}{h_i} \gpv_i}_2^2} \nonumber \\
   &= \frac{\gamma^2}{|\mathcal{P}_t|^2} \sum_{j=1}^{d}\Expect{}{\frac{(\gpv_i)_j^2}{h_i^2}}\nonumber \\
   &\label{eq:pow_const}\stackrel{(a)}{=}
   \frac{\gamma^2}{|\mathcal{P}_t|^2}\sum_{j=1}^{d}\Expect{}{(\gpv_i)_j^2}\Expect{}{\frac{1}{h_i^2}},
\end{align}
where $(a)$ follows from independence between the elements of $\gpv_i$ and $h_i$. Note that each client transmits only if $h_i^2 \geq h_{\min}$. Since $h_i$ is normally distributed with zero mean and variance $\sigma_h^2$, we have
\begin{align}
    \mu_{1/h}\triangleq \Expect{}{\frac{1}{h_i^2}} &= \frac{2}{\sqrt{2\pi}\sigma_h} \int_{\sqrt{h_{\min}}}^\infty \frac{1}{x^2} e^{-\frac{x^2}{2\sigma_h^2}}dx \nonumber \\
    &= \frac{1}{\sqrt{2\pi}\sigma_h} \int_{h_{\min}}^\infty \frac{1}{h} e^{-\frac{h}{2\sigma_h^2}} dh \nonumber  \\
    &= \frac{1}{\sqrt{2\pi}\sigma_h} \int_{h_{\min}/(2\sigma_h^2)}^\infty \frac{1}{h} e^{-h} dh \nonumber \\
    &= \frac{1}{\sqrt{2\pi}\sigma_h} E_1\left(\frac{h_{\min}}{2\sigma_h^2}\right),
%
\end{align}
where $E_1(x) \triangleq \int_x^{\infty} \frac{e^{-t}}{t}dt$. Hence \cref{eq:pow_const} can be written as 
\begin{align}  \frac{\gamma^2}{|\mathcal{P}_t|^2} \mu_{1/h}  \sum_{j=1}^{d}\Expect{}{(\gpv_i)_j^2}= \frac{\gamma^2}{|\mathcal{P}_t|^2} \mu_{1/h} \norm{\gpv_i}_2^2.
\end{align}
Recall that 
$\gpv_i=\Pm g_i(\xv_{i,t})= \Pm \left( f_i(\xv_{i,t}) + \mv_{i,t} \right)$. Since $\mv_{i,t}$ consists of independent Gaussian random variables with variance $\sigma^2_{\mathrm{client}}$, we have 
\begin{equation}
    \frac{\gamma^2}{|\mathcal{P}_t|^2} \mu_{1/h} \norm{\gpv_i}_2^2 = \frac{\gamma^2}{|\mathcal{P}_t|^2} \mu_{1/h} \left( \norm{\Pm  f_i(\xv_{i,t})}_2^2 + d\sigma_{\mathrm{client}}^2 \right).
\end{equation}

To bound $\norm{\Pm f_i(\xv_{i,t})}$, first, let us consider $\Pm \in \RR^{d \times k}$, where $d\geq k$. In this case, $\Pm$ consists of $k$ orthogonal columns. Hence, we have 
\begin{align}
    \norm{\Pm f_i(\xv_{i,t})}_2^2 &= 
    f_i^T(\xv_{i,t})\Pm^T \Pm f_i(\xv_{i,t})\nonumber \\
    &=  f_i^T(\xv_{i,t})\mathbf{I}_k f_i(\xv_{i,t})\nonumber \\      
    &= \norm{f_i(\xv_{i,t})}_2^2. 
\end{align}

Next, we consider the case $d<k$, in which $\Pm$ consists of $d$ orthogonal rows. Let us write $f_i(\xv_{i,t}) = |f|\mathbf{f}_u$, where $\mathbf{f}_u$ is the unit vector pointing to the same direction as $f_i(\xv_{i,t})$. Hence
\begin{align}
    \norm{\Pm f_i(\xv_{i,t})}_2^2 = |f|^2\sum_{i=1}^d \langle\Pm_i,f_u\rangle^2\leq |f|^2 = \norm{f_i(\xv_{i,t})}_2^2.
\end{align}
where $\Pm_i$ is the $i^{th}$ row of $\Pm$. 

Finally, by \cref{eq:power_shift}, the maximum value of $\norm{f_i(\xv_{i,t})}_2^2$ is attained when $\tilde{f}_i ( \xv_{i,t} )$ is exactly 1 at one coordinate and all zeros in the other coordinates. Hence, 
\begin{equation}
    \norm{f_i(\xv_{i,t})}_2^2 \leq \frac{1}{k^2}(k-1) + \left(1-\frac{1}{k}\right)^2 = 1-\frac{1}{k}.
\end{equation}

Hence, for each client, we must satisfy 
\begin{equation}
    \frac{\gamma^2}{|\mathcal{P}_t|^2} \mu_{1/h}\left( 1-\frac{1}{k} + d\sigma_{\mathrm{client}}^2 \right) \leq P,
\end{equation}
which leads to
\begin{equation}
    \gamma =  |\mathcal{P}_t| \sqrt{\frac{P}{ \mu_{1/h}\left( 1-\frac{1}{k} + d\sigma_{\mathrm{client}}^2 \right)}}.
\end{equation}


\section{Privacy Analysis}
\label{sec:privacy-analysis}

In this section, we provide the privacy analysis of the proposed over-the-air ensemble scheme. 
Note that, as discussed in \cref{subsec:ensuring-privacy}, while calculating the DP guarantees, the channel noise $\Vec{n}_t$ is ignored. Thus, our guarantees are upper bounds and the actual privacy guarantees can be stronger due to the channel noise.

We first analyze the case in which all the clients participate. 

\begin{thm}
\label{thm:eps-del-thm}
If all the clients participate in the inference, i.e., $p=1$, then, \cref{algor:model} is $(\varepsilon,\delta)$-DP such that for any $\varepsilon>0$,
\begin{equation}
\label{eq:eps-del-thm}
\delta=\Phi(1/(\sqrt{2}\sigma)-\varepsilon\sigma/\sqrt{2})-e^\varepsilon\Phi(-1/(\sqrt{2}\sigma)-\varepsilon\sigma/\sqrt{2}),    
\end{equation}
where $\Phi$ is the cumulative distribution function of standard normal distribution.
\end{thm}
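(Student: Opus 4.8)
The plan is to recognize \cref{algor:model} (under $p=1$) as an instance of the multivariate Gaussian mechanism and to reduce its analysis to the exact one-dimensional Gaussian privacy tradeoff. First I would write down, ignoring the channel noise $\nv_t$ as the text prescribes, the conditional law of $\zv_t$ given a model set $\mathcal{L}$. With $p=1$ we have $\card{\Pc_t}=n$, so the received signal is the deterministic vector $\frac{\gamma}{n}\Pm\sum_{i}f_i(\xv_{i,t})$ perturbed by $\frac{\gamma}{n}\Pm\sum_i\mv_{i,t}$; since the $\mv_{i,t}$ are i.i.d. $\mathcal{N}(\vec{0},(\sigma^2/n)\vec{I}_k)$, their sum is $\mathcal{N}(\vec{0},\sigma^2\vec{I}_k)$ and hence $\zv_t\mid\mathcal{L}\sim\mathcal{N}\!\big(\tfrac{\gamma}{n}\Pm\sum_i f_i(\xv_{i,t}),\ \tfrac{\gamma^2\sigma^2}{n^2}\Pm\Pm\tp\big)$. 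For a neighbouring pair $\mathcal{L},\mathcal{L}'$ differing only in client $j$, the two laws are Gaussians with identical covariance whose means differ only by $\frac{\gamma}{n}\Pm\big(f_j(\xv_{j,t})-f_j'(\xv_{j,t})\big)$.

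Next I would collapse this to an isotropic $k$-dimensional problem. Because $\Pm$ has orthonormal columns ($\Pm\tp\Pm=\vec{I}_k$ for $d\ge k$), both laws are supported on $\mathrm{col}(\Pm)$, and the map $\frac1\gamma\Pm\tp$ restricted to $\mathrm{col}(\Pm)$ is an invertible linear bijection onto $\RR^k$. Post-processing gives the inequality in one direction and invertibility gives it back, so the $(\varepsilon,\delta)$ guarantee of $\zv_t$ equals that of the reduced mechanism $\frac1n\sum_i f_i(\xv_{i,t})+\frac1n\sum_i\mv_{i,t}$ on $\RR^k$, a standard Gaussian mechanism with per-coordinate noise standard deviation $s=\sigma/n$ and mean shift $\frac1n\big(f_j(\xv_{j,t})-f_j'(\xv_{j,t})\big)$.

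The two remaining ingredients are the sensitivity and the scalar tradeoff. For the sensitivity I would bound $\norm{f_j-f_j'}_2$: since mean-centering subtracts the constant $\frac1k\onev$ it cancels in the difference, so $f_j-f_j'=\tilde f_j-\tilde f_j'$ with both $\tilde f_j,\tilde f_j'$ in the probability simplex, and the extremal configuration of two distinct one-hot beliefs (the same worst case used in \cref{subsec:power_normalization}) gives $\norm{f_j-f_j'}_2\le\sqrt2$. Thus the effective sensitivity is $\Delta=\frac{\sqrt2}{n}$ and the sensitivity-to-noise ratio is $\Delta/s=\sqrt2/\sigma$, independent of $\gamma$ and $n$. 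For the scalar mechanism I would project onto the shift direction, reducing to comparing $\mathcal{N}(0,s^2)$ and $\mathcal{N}(\Delta,s^2)$, compute the privacy-loss variable $\log\!\big(p(x)/q(x)\big)=(\Delta^2-2x\Delta)/(2s^2)$, note it is monotone decreasing in $x$ so the optimal rejection region is the half-line $\{x<\Delta/2-\varepsilon s^2/\Delta\}$, and evaluate $\delta=\Pr_P(R)-e^\varepsilon\Pr_Q(R)=\Phi(\Delta/2s-\varepsilon s/\Delta)-e^\varepsilon\Phi(-\Delta/2s-\varepsilon s/\Delta)$. Substituting $\Delta/2s=1/(\sqrt2\sigma)$ and $s/\Delta=\sigma/\sqrt2$ yields exactly \cref{eq:eps-del-thm}.

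The main obstacle I anticipate is establishing the \emph{exact} (not merely sufficient) scalar tradeoff, i.e.\ verifying that the half-line is genuinely the worst-case event and that $\delta=\max_R[\Pr_P(R)-e^\varepsilon\Pr_Q(R)]$ is attained there, which rests on the monotonicity of the privacy-loss variable. A secondary subtlety is the rank-deficiency of the covariance $\frac{\gamma^2\sigma^2}{n^2}\Pm\Pm\tp$ when $d>k$, which must be handled so that the bijection argument on $\mathrm{col}(\Pm)$ is legitimate; and the sensitivity bound should be double-checked for the {\it \gls{WBA-OAC}} variant with reweighting by $\wv_i$ to confirm $\sqrt2$ remains an upper bound.
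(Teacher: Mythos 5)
Your proposal is correct and reaches the same formula, but it takes a genuinely different route from the paper. The paper's proof is a two-step reduction: it observes that the scaling $\gamma$ and projection $\Pm$ are common to all clients, so the mechanism is equivalent (up to data-independent post-processing) to releasing $\sum_{i}f_i(\xv_{i,t})+\mathcal{N}(\vec{0},\sigma^2\vec{I}_k)$; it then computes the $L_2$ sensitivity of the sum as $\sqrt{2}$ (worst case: two distinct one-hot belief vectors, exactly your extremal configuration) and invokes the analytical Gaussian mechanism theorem of Balle and Wang (\cref{lem:analytical-GM}) as a black box to obtain \cref{eq:eps-del-thm}. You instead re-derive that lemma from first principles: reducing the multivariate Gaussian pair to a scalar pair by projecting onto the mean-shift direction, identifying the optimal rejection region via monotonicity of the privacy-loss variable, and evaluating $\delta=\Phi(\Delta/(2s)-\varepsilon s/\Delta)-e^\varepsilon\Phi(-\Delta/(2s)-\varepsilon s/\Delta)$ directly. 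Your version is self-contained and makes explicit the covariance structure $\frac{\gamma^2\sigma^2}{n^2}\Pm\Pm\tp$ that the paper glosses over; the paper's version is shorter and inherits the exact (iff) characterization for free. One point to tighten: your bijection-on-$\mathrm{col}(\Pm)$ reduction is stated only for $d\geq k$, but the algorithm explicitly permits $d<k$ (bandwidth compression), where $\Pm\tp\Pm\neq\vec{I}_k$ and the map is not injective. This is not fatal: since \cref{thm:eps-del-thm} only asserts that the mechanism satisfies the stated $(\varepsilon,\delta)$ guarantee (not that it is tight), the post-processing direction of your argument alone suffices there --- applying $\Pm$ with orthonormal rows can only contract the mean shift while keeping the projected noise isotropic, so the guarantee carries over; this is also implicitly how the paper handles both regimes at once. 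Your remaining flagged subtleties (monotonicity making the half-line worst-case, and the \gls{WBA-OAC} sensitivity, where $\norm{\wv_i\odot\rv_{i,t}}_1\leq 1$ keeps the bound at $\sqrt{2}$) both check out.
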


\begin{proof}
Our theorem is a special case of the following lemma.

\begin{lem}[Theorem 8 in~\cite{balle2018improving}]\label{lem:analytical-GM}
Let $f:\mathbb{L}\to \mathbb{R}^k$ be a function with $||f(\mathcal{L})-f(\mathcal{L}')||_2\leq C$, where $\mathcal{L}$ and $\mathcal{L}'$ are neighboring inputs and $||\cdot||_2$ is $L_2$ norm. A mechanism $M(\mathcal{L})=f(\mathcal{L})+\mathcal{N}(0,\sigma^2\Vec{I}_k)$ is $(\varepsilon,\delta)$-DP if and only if 
\begin{equation}
\label{eq:analytical-GM}
    \Phi\left(C/(2\sigma)-\varepsilon\sigma/C\right)-e^{\varepsilon}\Phi\left(-C/(2\sigma)-\varepsilon\sigma/C\right)\leq \delta.
\end{equation}
\end{lem}

To apply \cref{lem:analytical-GM} directly in our case, first observe that $\gamma$ and $\Pm$ are common to all the clients. Hence, the effective noise is directly added to $\Vec{\tilde{z}}_t \triangleq \sum_{i \in \mathcal{P}_t} f_i(\xv_{i,t})$, and we need to calculate the $L_2$ sensitivity, $C$, of $\Vec{\tilde{z}}_t$. Note that we can also ignore the operation in \cref{eq:power_shift} since it is only a shift operation that does not change the sensitivity calculations.
Consider neighboring sets $\mathcal{L}$ and $\mathcal{L}'$. We denote the noiseless vector received by the CIS by $\Vec{\tilde{z}}_t$ when the set of local models is $\mathcal{L}$, and by $\Vec{\tilde{z}}_t'$ when it is $\mathcal{L}'$. Then,
\begin{equation}
    C = \max_{\Vec{\tilde{z}}_t, \Vec{\tilde{z}}_t' } \left\Vert \Vec{\tilde{z}}_t- \Vec{\tilde{z}}_t' \right\Vert_2 = \max_{\Vec{\tilde{z}}_t \Vec{\tilde{z}}_t' } \left ( \sum_{j=1}^{k} (\Vec{\tilde{z}}_{t,j} - \Vec{\tilde{z}}'_{t,j})^2  \right)^{1/2}.
\end{equation}

We know that $\LB f_i(\xv_{i,t}) \RB _j \in [0,1], \forall j \in [k]$.
Hence, $\Vert \Vec{\tilde{z}}_t- \Vec{\tilde{z}}_t' \Vert_2$ is maximized 
when $\Vec{\tilde{z}}_t-\Vec{\tilde{z}}_t'$ has only two non-zero elements: one is 1 and the other is -1.
Then, $C=\max_{\Vec{\tilde{z}}_t, \Vec{\tilde{z}}_t' } \Vert \Vec{\tilde{z}}_t- \Vec{\tilde{z}}_t' \Vert_2=\sqrt{2}$.

Finally, by substituting $C=\sqrt{2}$ into \cref{eq:analytical-GM}, we obtain \cref{eq:eps-del-thm}.
\end{proof}

Next, we present the amplification effect of client sampling on the privacy guarantees.

\begin{thm}
\label{thm:eps-del-samp-thm}
If each client independently participate in inference with probability $p<1$, then \cref{algor:model} is $(\varepsilon',\delta')$-DP, where, for any $\varepsilon'>0$, 
\begin{multline}
\label{eq:eps-del-samp-thm}
\delta'=\frac{p}{1-(1-p)^n}\Big(\Phi(1/(\sqrt{2}\sigma)-\varepsilon\sigma/\sqrt{2}) \\
-e^{\varepsilon}\Phi(-1/(\sqrt{2}\sigma)-\varepsilon\sigma/\sqrt{2})\Big),
\end{multline}
where $\varepsilon=\log{1+((1-(1-p)^n)/p)(e^{\varepsilon'}-1)}$.
\end{thm}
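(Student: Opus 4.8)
The plan is to derive \cref{thm:eps-del-samp-thm} from \cref{thm:eps-del-thm} by a privacy-amplification-by-subsampling argument, where the random participation of the clients plays the role of the subsampling. The only client whose model differs between the neighbouring sets $\mathcal{L}$ and $\mathcal{L}'$ is client $j$; whenever $j \notin \mathcal{P}_t$, its differing model never enters the superposition, so conditioned on $j$'s absence the observation $\zv_t$ has identical distributions under $\mathcal{L}$ and $\mathcal{L}'$. All distinguishing power is therefore concentrated on the event $\{j \in \mathcal{P}_t\}$, and amplification comes precisely from the fact that this event has probability strictly below $1$.

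First I would compute the effective participation probability of client $j$. Because the clients resample their coin tosses until $\card{\Pc_t} > 0$, the participation law is the independent-$p$ law conditioned on $\{\card{\Pc_t} \geq 1\}$. Since $\{j \in \Pc_t\} \subseteq \{\card{\Pc_t} \geq 1\}$, Bayes' rule gives $\Pr(j \in \Pc_t \mid \card{\Pc_t}\geq 1) = p / (1-(1-p)^n) =: \tilde{p}$. This is exactly the factor appearing in \cref{eq:eps-del-samp-thm}, since $1/\tilde{p} = (1-(1-p)^n)/p$. Next I would fix the participation pattern of the other $n-1$ clients and write $\zv_t$ as a two-component mixture: with probability $\tilde{p}$ client $j$ contributes, and with probability $1-\tilde{p}$ it is absent so that the mixtures under $\mathcal{L}$ and $\mathcal{L}'$ share a common component. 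Crucially, the per-client noise scaling $\sigma_{\mathrm{client}}^2=\sigma^2/\card{\Pc_t}$ makes the aggregated noise variance equal to $\sigma^2$ for every realisation, so \cref{thm:eps-del-thm} and \cref{lem:analytical-GM} apply verbatim with sensitivity $C=\sqrt{2}$ to the comparison between $j$ contributing $f_j$ and $j$ contributing $f_j'$.

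Feeding this base $(\varepsilon,\delta)$ guarantee into the mixture and bounding the resulting hockey-stick divergence yields $(\varepsilon',\delta')$-DP with $e^{\varepsilon'} = 1 + \tilde{p}(e^{\varepsilon}-1)$ and $\delta' = \tilde{p}\,\delta$. Rearranging the first identity gives $\varepsilon = \log(1 + \tilde{p}^{-1}(e^{\varepsilon'}-1))$, which is the stated relation once $\tilde{p}^{-1}=(1-(1-p)^n)/p$ is substituted; and inserting the $\delta$ of \cref{eq:eps-del-thm} into $\delta'=\tilde{p}\delta$ reproduces \cref{eq:eps-del-samp-thm} exactly.

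The main obstacle is the mixture bound itself. A naive pointwise estimate of $\Pr(\zv_t \in \mathcal{R})$ produces no amplification under the replace-one adjacency used here, because the shared absent-component cannot simply be discarded; the correct step requires the advanced joint convexity of the hockey-stick divergence (as in the tight subsampling analysis of~\cite{balle2018improving}), which exploits that the absent-component is itself a near neighbour of both participating components. Care is also needed to verify that the sensitivity stays $\sqrt{2}$ for every realisation of the other clients' participation, so that a single $(\varepsilon,\delta)$ pair governs all the conditional sub-mechanisms, and that the non-empty-participation conditioning affects only the weight $\tilde{p}$ and not the conditional output law given $j \in \Pc_t$.
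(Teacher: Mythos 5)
Your proposal is correct and follows essentially the same route as the paper: a mixture decomposition over the differing client's participation with weight $\eta = p/(1-(1-p)^n)$ obtained by Bayes' rule, advanced joint convexity of the hockey-stick divergence to handle the shared absent-component, and reduction of both resulting divergence terms to \cref{thm:eps-del-thm}, yielding $\delta' = \eta\delta$ and $e^{\varepsilon'} = 1 + \eta(e^{\varepsilon}-1)$. The only substantive detail the paper adds is the explicit coupling constructions (over the other clients' Poisson sampling) used to justify that $D_\alpha(\mu_1\|\mu_0)$ and $D_\alpha(\mu_1\|\mu_1')$ are each bounded by $\delta$, a step you gesture at via the ``near neighbour'' observation but do not spell out.
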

\begin{proof}
Without loss of generality, let $\mathcal{L}$ and $\mathcal{L'}$ be two neighbouring sets of models differing only in the first client's model, i.e., it is either $f_1$ or $f_1'$. Let us write the output distribution of \cref{algor:model} as a mixture distribution. When the model set is $\mathcal{L}$, we have 
$\mu = (1-\eta)\mu_0 + \eta\mu_1$ and when the model set is $\mathcal{L'}$, we have $\mu' = (1-\eta)\mu_0 + \eta\mu_1'$. In these expressions, $\eta$ is the probability that client 1 is sampled, $\mu_0$ is the output distribution when client 1 is not sampled, $\mu_1$ is the output distribution when client 1 is sampled and the model set is $\mathcal{L}$, while $\mu_1'$ is the output distribution when client 1 is sampled and the model set is $\mathcal{L'}$. Recall that we sample client models each with probability $p$ from $\mathcal{L}$ or $\mathcal{L}'$, and the CIS receives non-zero vectors only when $|\mathcal{P}_t|>0$. Hence, $\eta=\Pr\{\text{Client 1 is sampled}\mid |\mathcal{P}_t|>0\}$, resulting in $\eta=p/(1-(1-p)^n)$ via Bayes' rule.

\begin{lem}[Theorem 1 in~\cite{balle2018privacy}]
\label{lem:privacy_profile} A mechanism $\mathcal{M}$ is $(\varepsilon',\delta')$-DP
if and only if 
\begin{equation}
\sup_{\mathcal{L},\mathcal{L'}}D_{\alpha}(\mathcal{M}(\mathcal{L})||\mathcal{M}(\mathcal{L'}))\leq\delta',    
\end{equation}
where $\alpha=e^{\varepsilon'}$ and $D_{\alpha}(\mu||\mu')\triangleq\int_{Z}\max \{0,d\mu(z)-\alpha d\mu'(z)\}d(z).$
\end{lem}
\Cref{lem:privacy_profile} implies that it is enough to bound $D_{\alpha}(\mu||\mu')$ to provide DP guarantees. For this, we use the relation in \cref{lem:ajc}, which is called \emph{advanced joint convexity} of $D_{\alpha}$.

\begin{lem}[Theorem 2 in~\cite{balle2018privacy}]
\label{lem:ajc}For $\alpha\geq1$,
we have 
\begin{equation}
\label{eq:advanced_joint_convexity}
D_{\alpha'}\left(\mu||\mu'\right)=\eta D_{\alpha}\left(\mu_{1}||(1-\beta)\mu_{0}+\beta\mu_{1}'\right)
\end{equation}
where $\alpha'=1+\eta(\alpha-1)$ and $\beta=\alpha'/\alpha$. 
\end{lem}

We further upper bound \cref{eq:advanced_joint_convexity} via convexity:
\begin{equation}
\label{eq:convexity}
    D_{\alpha'}\left(\mu||\mu'\right) \leq \eta (1-\beta)D_{\alpha}(\mu_1||\mu_0) + \eta \beta D_{\alpha}(\mu_1||\mu_1').
\end{equation}

To bound $D_{\alpha}(\mu_1||\mu_0)$, observe that there exists a coupling between $\mu_1$ and $\mu_0$ as follows. For $\mu_0$, to guarantee $|\mathcal{P}_t|>0$, let us first sample exactly one client $c$ other than client 1 since we know that client 1 is not sampled. Then, we apply Poisson sampling on the remaining set, i.e., $[n]\setminus\{c,1\}$, to determine the other participating clients. For $\mu_1$, assume that we have the same realization of Poisson sampling on $[n]\setminus \{c,1\}$ as in $\mu_0$. Further, by definition of $\mu_1$, client 1 is also sampled. Hence, $\mu_1$ and $\mu_0$ can be seen as output distributions of \cref{algor:model} such that the input client sets differ in only one element. Hence, $D_{\alpha}(\mu_1||\mu_0)\leq \delta$ due to \cref{thm:eps-del-thm}. Similarly, to bound $D_{\alpha}(\mu_1,\mu_1')$, a coupling exists between $\mu_1$ and $\mu_1'$ such that user 1 is sampled and $f_1$ and $f_1'$ are the models in user 1, for $\mu_1$ and $\mu_1'$, respectively. To determine the other participating clients, the same realization of Poisson sampling on $[n]\setminus\{1\}$ is applied in both $\mu_1$ and $\mu_1'$. Since the input client sets also differ in one element, in this case, due to \cref{thm:eps-del-thm}, we have $D_{\alpha}(\mu_1,\mu_1')\leq \delta$. If we insert the bounds on $D_{\alpha}(\mu_1,\mu_0)$ and $D_{\alpha}(\mu_1,\mu_1')$ into \cref{eq:convexity}, we obtain $D_{\alpha'}(\mu||\mu')\leq \eta \delta$, from which \cref{eq:eps-del-samp-thm} follows. The expression for $\varepsilon$ can be directly derived from the expression $\alpha'=1+\eta(\alpha-1)$.
\end{proof}


\section{Numerical Results}
\label{sec:numerical_results}
In this section, we present our experimental setup, implementation details and simulation results.


\subsection{The Datasets}
We employ eight different multi-class classification datasets to demonstrate the effectiveness of our framework: CIFAR-10, CIFAR-100, FashionMNIST, Food101, OxfordPets, Emotion, Imdb, and MultiviewPets. CIFAR-10 is a multi-class image classification dataset containing \num{50000} training images, \num{10000} test images, and \num{10} target classes~\cite{krizhevsky2009learning}.
CIFAR-100 is the same dataset with finer granularity, i.e., it is partitioned into \num{100} target classes~\cite{krizhevsky2009learning}. FashionMNIST is a multi-class fashion image classification dataset with \num{60000} training images, \num{10000} test images, and 10 target classes~\cite{xiao2017/online}. Food101 is a dish classification dataset with \num{75750} training and \num{25250} test images, and \num{101} target classes~\cite{bossard14}. OxfordPets (Oxford-IIIT Pet) is a multi-class pet classification dataset with \num{37} target classes, \num{3680} training and \num{3669} test images~\cite{catsdogs}. Emotion is an English Twitter sentiment classification dataset with \num{16000} training and \num{2000} test texts, and \num{6} target classes~\cite{saravia-etal-2018-carer}. Imdb is a binary sentiment classification dataset with \num{25000} training texts, and \num{25000} test texts~\cite{maas-EtAl:2011:ACL-HLT2011}.

For all the datasets, we use predefined training and testing sets, except that we split 10\% of the training set as the validation set and only use the remaining 90\% for training. Note that the training splits are further split disjointly across clients as described in \cref{sec:system_model} except the multi-view datasets. We have generated MultiViewPets datasets using OxfordPets images and labels following the common practice of simulating different camera angles. Instead of splitting the dataset disjointly, we crop regions of the same images and assign a region to each client. We crop \num{20} regions (via \num{4} vertical and \num{5} horizontal lines) with \num{50}\% overlap between neighboring regions.

\begin{remark}
    Throughout the paper, we assume that our privacy guarantees are local-model-level, which implies the same DP guarantees for the local datasets. However, in the case of multi-view classification on the MultiViewPets dataset, this is no longer valid since some parts of the data samples located in a client also exist in other clients due to the nature of the multi-view setting. This results in weaker \gls{DP} guarantees at the local dataset level. Nevertheless, our local-model-level privacy guarantee and all of its benefits still hold.
\end{remark}

\subsection{Experimental Setup}
We repeat all the experiments with 5 different random seeds and report the average results.  We randomly split the training data among the clients equally. We consider $n=20$ clients with a participation probability of $p=1.0$ and a channel \gls{SNR} of \num{0} \si{\decibel}, except when they are changed gradually in \cref{sec:conditions}. 

We compute and report macro-averaged F1 (Macro-F1) scores by averaging per-class F1 scores, which is a commonly employed metric in multi-class classification since it treats each class with equal significance, ensuring a balanced evaluation across all classes.


\subsection{Implementation Details}
We employ PyTorch for all the experiments, torchvision for vision experiments and Huggingface transformers for text experiments~\cite{pytorch,marcel2010torchvision,wolf-etal-2020-transformers}. For image datasets, we use MobileNetV3-Large~\cite{howard2019searching} except we change its final layer to make it compatible with the target number of classes. Instead of training from scratch, we fine-tune a pre-trained model~\cite{marcel2010torchvision} for \num{50} epochs. Similarly, for text datasets, we fine-tune the DistilBERT-base-uncased model~\cite{sanh2019distilbert} for \num{50} epochs. To make the sizes of the images compatible with our MobileNetV3 network, we interpolate them to $224\times224$ images following PyTorch's pretraining recipe. Since the network accepts inputs with three channels, we convert grayscale datasets by repeating the grayscale image for all RGB channels. 
We employ \gls{SGD} optimizer with a batch size of \num{128}, momentum \num{0.9}, weight decay \num{5e-4}, learning rate \num{0.05} and cosine annealing scheduling method~\cite{loshchilov2016sgdr}. We employ cross-entropy loss for training all local models. We also utilize label smoothing factor $0.1$ for training vision models to better calibrate the beliefs~\cite{szegedy2015rethinking}.

\subsection{Comparison with the Baselines}
\label{sec:comparison_baselines}

In \cref{tab:comparison}, we compare the proposed OAC-based methods with the {\it Best Client} model and the ensemble methods with orthogonal transmission in terms of their Macro-F1 scores. We choose the model with the highest Macro-F1 score on the same validation set as the {\it Best Client} model. For fairness, the client having the best model transmits its inference over $k$ channels. In orthogonal methods, denoted by {\it -Orth} suffix, all the devices transmit their inferences via different channels, i.e., $\card{\mathcal{P}_t} \times k$ channels in total.

\begin{table*}[t]
    \centering
    \caption{Comparison of our method with the baselines in terms of Macro-F1 scores}
    \resizebox{\textwidth}{!}{%
    \begin{tabular}{clcccccccc}
    \toprule
    $\varepsilon$ & Method & CIFAR-10 & CIFAR-100 & FashionMnist & Food101 & OxfordPets & Emotion & Imdb & MultiViewPets\\ \midrule
\multirow{7}{*}{$\infty$} & Best Client & $83.16  {\scriptstyle \pm 0.29}$ & $59.49  {\scriptstyle \pm 0.67}$ & $84.24  {\scriptstyle \pm 0.14}$ & $58.42  {\scriptstyle \pm 0.23}$ & $58.70  {\scriptstyle \pm 2.07}$ & $71.03  {\scriptstyle \pm 1.87}$ & $82.90  {\scriptstyle \pm 0.19}$ & $83.33  {\scriptstyle \pm 0.41}$ \\ 
  & BA-Orth & $91.57  {\scriptstyle \pm 0.04}$ & $73.02  {\scriptstyle \pm 0.35}$ & $91.87  {\scriptstyle \pm 0.07}$ & $71.25  {\scriptstyle \pm 0.12}$ & $82.87  {\scriptstyle \pm 0.57}$ & $83.80  {\scriptstyle \pm 0.52}$ & $90.86  {\scriptstyle \pm 0.07}$ & $87.46  {\scriptstyle \pm 0.24}$ \\ 
  & WBA-Orth & $91.56  {\scriptstyle \pm 0.04}$ & $73.02  {\scriptstyle \pm 0.34}$ & $91.86  {\scriptstyle \pm 0.08}$ & $71.24  {\scriptstyle \pm 0.12}$ & $82.88  {\scriptstyle \pm 0.64}$ & $83.84  {\scriptstyle \pm 0.50}$ & $90.86  {\scriptstyle \pm 0.07}$ & $87.43  {\scriptstyle \pm 0.23}$ \\ 
  & MV-Orth & $91.55  {\scriptstyle \pm 0.09}$ & $72.12  {\scriptstyle \pm 0.26}$ & $91.84  {\scriptstyle \pm 0.04}$ & $70.75  {\scriptstyle \pm 0.17}$ & $82.03  {\scriptstyle \pm 0.70}$ & $83.69  {\scriptstyle \pm 0.58}$ & $90.83  {\scriptstyle \pm 0.06}$ & $86.58  {\scriptstyle \pm 0.22}$ \\ 
  & BA-OAC & $91.79  {\scriptstyle \pm 0.10}$ & $\mathbf{73.25  {\scriptstyle \pm 0.36}}$ & $\mathbf{91.99  {\scriptstyle \pm 0.12}}$ & $\mathbf{71.74  {\scriptstyle \pm 0.10}}$ & $\mathbf{83.55  {\scriptstyle \pm 0.61}}$ & $83.85  {\scriptstyle \pm 0.63}$ & $\mathbf{91.00  {\scriptstyle \pm 0.05}}$ & $\mathbf{87.77  {\scriptstyle \pm 0.26}}$ \\ 
  & WBA-OAC & $\mathbf{91.80  {\scriptstyle \pm 0.10}}$ & $73.24  {\scriptstyle \pm 0.35}$ & $\mathbf{91.99  {\scriptstyle \pm 0.12}}$ & $71.72  {\scriptstyle \pm 0.09}$ & $83.51  {\scriptstyle \pm 0.63}$ & $\mathbf{83.86  {\scriptstyle \pm 0.64}}$ & $\mathbf{91.00  {\scriptstyle \pm 0.05}}$ & $\mathbf{87.77  {\scriptstyle \pm 0.24}}$ \\ 
  & MV-OAC & $91.66  {\scriptstyle \pm 0.06}$ & $72.04  {\scriptstyle \pm 0.28}$ & $91.88  {\scriptstyle \pm 0.08}$ & $70.80  {\scriptstyle \pm 0.14}$ & $82.33  {\scriptstyle \pm 0.54}$ & $83.60  {\scriptstyle \pm 0.51}$ & $90.99  {\scriptstyle \pm 0.06}$ & $86.79  {\scriptstyle \pm 0.20}$ \\ 
 \midrule  
 \multirow{7}{*}{$5$} & Best Client & $18.37  {\scriptstyle \pm 0.13}$ & $2.30  {\scriptstyle \pm 0.10}$ & $18.84  {\scriptstyle \pm 0.16}$ & $2.35  {\scriptstyle \pm 0.23}$ & $5.42  {\scriptstyle \pm 0.24}$ & $22.99  {\scriptstyle \pm 0.79}$ & $60.68  {\scriptstyle \pm 0.16}$ & $6.40  {\scriptstyle \pm 0.20}$ \\  
  & BA-Orth & $53.36  {\scriptstyle \pm 0.34}$ & $6.65  {\scriptstyle \pm 0.19}$ & $57.38  {\scriptstyle \pm 0.33}$ & $6.55  {\scriptstyle \pm 0.25}$ & $9.17  {\scriptstyle \pm 0.73}$ & $60.17  {\scriptstyle \pm 1.18}$ & $84.64  {\scriptstyle \pm 0.20}$ & $19.35  {\scriptstyle \pm 0.35}$ \\  
  & WBA-Orth & $53.36  {\scriptstyle \pm 0.33}$ & $6.66  {\scriptstyle \pm 0.20}$ & $57.36  {\scriptstyle \pm 0.35}$ & $6.55  {\scriptstyle \pm 0.25}$ & $9.20  {\scriptstyle \pm 0.76}$ & $60.16  {\scriptstyle \pm 1.16}$ & $84.64  {\scriptstyle \pm 0.20}$ & $19.32  {\scriptstyle \pm 0.37}$ \\  
  & MV-Orth & $65.08  {\scriptstyle \pm 0.46}$ & $19.17  {\scriptstyle \pm 0.25}$ & $66.32  {\scriptstyle \pm 0.57}$ & $19.61  {\scriptstyle \pm 0.38}$ & $25.16  {\scriptstyle \pm 0.69}$ & $60.16  {\scriptstyle \pm 1.32}$ & $84.62  {\scriptstyle \pm 0.21}$ & $35.59  {\scriptstyle \pm 0.35}$ \\  
  & BA-OAC & $91.40  {\scriptstyle \pm 0.15}$ & $56.68  {\scriptstyle \pm 0.24}$ & $\mathbf{91.89  {\scriptstyle \pm 0.13}}$ & $50.85  {\scriptstyle \pm 0.26}$ & $65.89  {\scriptstyle \pm 1.47}$ & $83.69  {\scriptstyle \pm 0.41}$ & $90.96  {\scriptstyle \pm 0.07}$ & $83.22  {\scriptstyle \pm 0.20}$ \\  
  & WBA-OAC & $91.39  {\scriptstyle \pm 0.14}$ & $56.67  {\scriptstyle \pm 0.24}$ & $91.87  {\scriptstyle \pm 0.11}$ & $50.84  {\scriptstyle \pm 0.27}$ & $65.90  {\scriptstyle \pm 1.52}$ & $\mathbf{83.83  {\scriptstyle \pm 0.40}}$ & $90.96  {\scriptstyle \pm 0.07}$ & $83.21  {\scriptstyle \pm 0.20}$ \\  
  & MV-OAC & $\mathbf{91.60  {\scriptstyle \pm 0.07}}$ & $\mathbf{71.21  {\scriptstyle \pm 0.21}}$ & $91.80  {\scriptstyle \pm 0.07}$ & $\mathbf{69.98  {\scriptstyle \pm 0.08}}$ & $\mathbf{81.17  {\scriptstyle \pm 0.77}}$ & $83.51  {\scriptstyle \pm 0.86}$ & $\mathbf{90.97  {\scriptstyle \pm 0.05}}$ & $\mathbf{86.30  {\scriptstyle \pm 0.21}}$ \\  
 \midrule  
 \multirow{7}{*}{$1$} & Best Client & $11.55  {\scriptstyle \pm 0.12}$ & $1.21  {\scriptstyle \pm 0.13}$ & $11.71  {\scriptstyle \pm 0.23}$ & $1.26  {\scriptstyle \pm 0.13}$ & $3.21  {\scriptstyle \pm 0.20}$ & $16.42  {\scriptstyle \pm 0.99}$ & $52.65  {\scriptstyle \pm 0.18}$ & $3.38  {\scriptstyle \pm 0.18}$ \\  
  & BA-Orth & $17.21  {\scriptstyle \pm 0.26}$ & $1.56  {\scriptstyle \pm 0.11}$ & $17.81  {\scriptstyle \pm 0.15}$ & $1.52  {\scriptstyle \pm 0.13}$ & $3.54  {\scriptstyle \pm 0.30}$ & $23.55  {\scriptstyle \pm 1.24}$ & $61.15  {\scriptstyle \pm 0.16}$ & $4.34  {\scriptstyle \pm 0.22}$ \\  
  & WBA-Orth & $17.21  {\scriptstyle \pm 0.27}$ & $1.56  {\scriptstyle \pm 0.11}$ & $17.81  {\scriptstyle \pm 0.16}$ & $1.52  {\scriptstyle \pm 0.13}$ & $3.56  {\scriptstyle \pm 0.30}$ & $23.55  {\scriptstyle \pm 1.24}$ & $61.15  {\scriptstyle \pm 0.16}$ & $4.34  {\scriptstyle \pm 0.22}$ \\  
  & MV-Orth & $19.31  {\scriptstyle \pm 0.14}$ & $2.22  {\scriptstyle \pm 0.19}$ & $19.44  {\scriptstyle \pm 0.16}$ & $2.26  {\scriptstyle \pm 0.11}$ & $4.92  {\scriptstyle \pm 0.35}$ & $23.60  {\scriptstyle \pm 1.23}$ & $61.14  {\scriptstyle \pm 0.18}$ & $5.76  {\scriptstyle \pm 0.22}$ \\  
  & BA-OAC & $71.14  {\scriptstyle \pm 0.37}$ & $12.72  {\scriptstyle \pm 0.26}$ & $76.07  {\scriptstyle \pm 0.36}$ & $12.14  {\scriptstyle \pm 0.15}$ & $13.80  {\scriptstyle \pm 0.99}$ & $\mathbf{74.80  {\scriptstyle \pm 1.08}}$ & $89.14  {\scriptstyle \pm 0.13}$ & $32.39  {\scriptstyle \pm 0.95}$ \\  
  & WBA-OAC & $71.14  {\scriptstyle \pm 0.37}$ & $12.72  {\scriptstyle \pm 0.26}$ & $76.04  {\scriptstyle \pm 0.37}$ & $12.14  {\scriptstyle \pm 0.15}$ & $13.83  {\scriptstyle \pm 0.99}$ & $74.78  {\scriptstyle \pm 1.05}$ & $89.14  {\scriptstyle \pm 0.13}$ & $32.39  {\scriptstyle \pm 0.93}$ \\  
  & MV-OAC & $\mathbf{82.43  {\scriptstyle \pm 0.33}}$ & $\mathbf{36.24  {\scriptstyle \pm 0.58}}$ & $\mathbf{83.73  {\scriptstyle \pm 0.19}}$ & $\mathbf{36.66  {\scriptstyle \pm 0.19}}$ & $\mathbf{40.95  {\scriptstyle \pm 1.60}}$ & $74.66  {\scriptstyle \pm 0.94}$ & $\mathbf{89.19  {\scriptstyle \pm 0.12}}$ & $\mathbf{56.40  {\scriptstyle \pm 0.37}}$ \\  
 \bottomrule
            \end{tabular}}
            \label{tab:comparison}
            \end{table*}

\begin{figure*}[!t]
\centering
\foreach \plottype/\plotname/\epsval in {non_private/Non-private/\infty,weak_private/Weak private/5,private/Private/1}{    \begin{tikzpicture}[scale=0.55]
	\pgfplotstableread[col sep=comma]{results/comparison_\plottype.csv}\csvdata
	\begin{axis}[
            title = {$\varepsilon = \epsval$},
		boxplot/draw direction = y,
		axis x line* = bottom,
		axis y line = left,
		xtick = {1, 2, 3, 4, 5, 6, 7},
		xticklabel style = {align=center, rotate=60},
		xticklabels = {Best Client,BA-Orth,WBA-Orth,MV-Orth,MV-OAC,BA-OAC,WBA-OAC},
		xtick style = {draw=none}, 
		ylabel = {Macro-F1},
            ymin=0.0,
            ymax=1.05,
            ytick={0,0.5,1},
            enlarge y limits=0,
            enlarge x limits=0.05,
	]
		\foreach \n in {0,...,6} {
			\addplot+[boxplot, fill, mark=none, draw=black, solid] table[y index=\n] {\csvdata};
		}
	\end{axis}
\end{tikzpicture}
}
\caption{Box plots for the Macro-F1 score for all the datasets in the case of non-private ($\varepsilon=\infty$, left), moderately private, ($\varepsilon=5$, middle), strongly private ($\varepsilon=1$, right) scenarios.} 
\label{fig:comparison_boxplots}
\end{figure*}
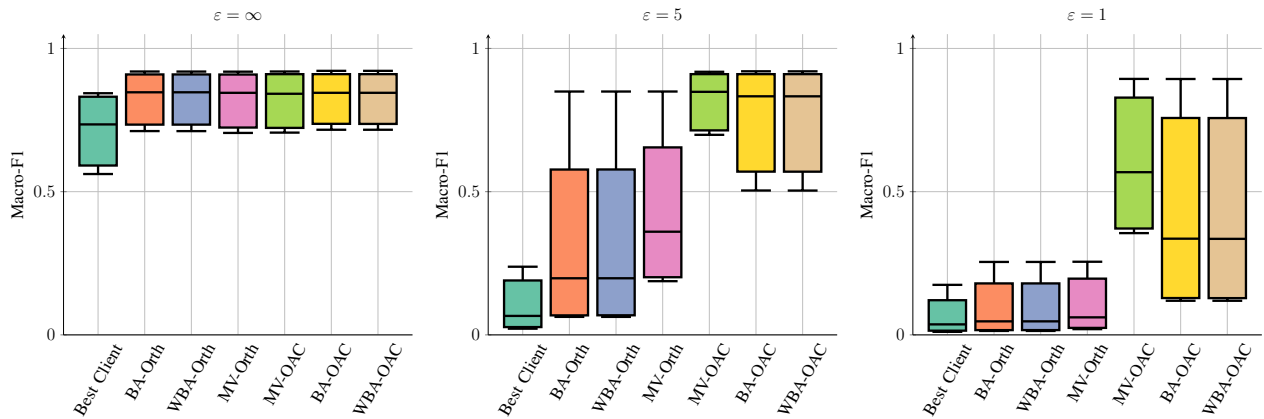

We observe that, compared to the {\it Best Client} model, ensemble methods significantly improve the test scores, especially in the private setting. Moreover, while orthogonal and OAC-based methods perform competitively in the non-private setting, when privacy is involved, the {\it Best Client} model and orthogonal methods perform near-random, and significantly worse than the OAC-based methods. Note that orthogonal methods need $\card{\mathcal{P}_t} \times k$ channel uses, whereas OAC-based methods need only $k$ channel uses; yet, OAC-based methods outperform orthogonal ones in the private setting. In all the privacy settings, i.e., $\varepsilon \in \{1,5,\infty\}$, {\it Weighted Belief Averaging} and standard (unweighted) {\it Belief Averaging} methods perform very close for both orthogonal and \gls{OAC}-based variants, so we refer to both  {\it Belief Averaging} and {\it Weighted Belief Averaging} via {\it Belief Averaging} term in the rest of this subsection.

Previous studies suggest that ensembling via {\it Belief Averaging} generally performs better than {\it Majority Voting}~\cite{kuncheva2002theoretical,wang2020averaging}. Our non-private results also support this argument as beliefs contain more information compared to conveying local decisions. However, interestingly, when $\varepsilon=1$ or $\varepsilon=5$, {\it Majority Voting} outperforms {\it Belief Averaging} for both orthogonal and OAC-based settings. This can be explained by the fact that the increasing noise levels result in relatively unreliable beliefs, since the individual values of beliefs are smaller, and thus more sensitive to the noise added for privacy.

There is no significant winner between the {\it Majority Voting} and {\it Belief Averaging} schemes, especially in the non-private case, for both orthogonal and \gls{OAC} cases. This situation contradicts the fact that averaging, which is equivalent to our {\it Belief Averaging}, is the clear winner in previous studies~\cite{kuncheva2002theoretical,wang2020averaging}, in which there is no noise added to the predictions due to privacy or wireless channel. This is probably caused by the channel and privacy noise terms, which result in the situation that more precise inferences per model result in a better ensemble. This is also supported by our results, which show that as noise increases, the relative performance of {\it Majority Voting} increases compared to {\it Belief Averaging} counterparts.

Moreover, as the number of classes decreases, the difference between {\it Belief Averaging} and {\it Majority Voting} decreases, which is probably due to the spread of the beliefs among classes. This can be observed through the difference between {\it Belief Averaging} and {\it Majority Voting} experiments on Imdb (\num{2} classes), Emotion (\num{6} classes), CIFAR-10 (\num{10} classes) and CIFAR-100 (\num{100} classes) datasets.

In the multi-view setting on the MultiViewPets dataset, the methods behave similarly in terms of Macro-F1 score compared to the ensemble setting. \Gls{OAC}-based methods outperform all of the orthogonal methods and {\it Best Client} method. In the non-private setting, {\it Belief Averaging}-based methods outperform {\it Majority Voting} while in weak-private and private settings {\it Majority Voting} outperforms {\it Belief Averaging}-based methods. Similar to the ensemble setting, {\it Best Client} performs the worst among all the methods in the same privacy setting.

\Cref{fig:comparison_boxplots} shows the boxplots of the evaluated methods over all the datasets for different privacy levels. In addition to the abovementioned points, as shown in the box plots, we observe an increase in the standard deviations of all the methods over different datasets as we decrease $\varepsilon$. This increase is caused by the privacy noise added as we decrease $\varepsilon$. These box plots also validate our above-mentioned claims.



\subsection{Comparison with Varying Conditions}
\label{sec:conditions}
\Cref{fig:conditions_label,fig:conditions_belief,fig:conditions_weighted_belief} show the performance on the CIFAR-10 dataset for varying channel \gls{SNR}, $p$, $d$ and $\varepsilon$ values of {\it Majority Voting}, {\it Belief Averaging} and {\it Weighted Belief Averaging} fusion methods, respectively.

These three fusion methods perform similarly with respect to the evaluated channel \gls{SNR}, $p$ and $d$. The left figures show that the performance of the methods slightly increases with the channel \gls{SNR}, especially for \gls{SNR} values below 2 dB. In the middle figures, we observe that higher $p$ improves the performance significantly in the private settings. Although lower $p$ has a privacy amplification effect which decreases the additional noise variance required to attain the desired privacy level, we observe that its privacy amplification effect is not as significant as the impact of fewer client participation on the inference performance. 
In the non-private setting $(\varepsilon=\infty)$, having higher participation also helps to get a higher macro-F1 score, but not as much as in the private setting. These plots also show that the private setting is more sensitive to varying conditions for both $p$ and channel \gls{SNR}. The right hand side figures clearly demonstrate that the result improves as the projection dimension $d$ increases until $d=k$ and reaches its peak at $d=k$. Further increase in $d$ does not benefit the result since we consider average power constraint in this work, which keeps the noise ratio fixed with respect to the signal.

\subsection{Ablation Study of Privacy and Projection Methods}
\label{sec:ablation_privacy_projection}

\foreach \plottype/\plotname/\fulldesc in {label/MV/MV-OAC and MV-Orth ,belief/BA/BA-OAC and BA-Orth ,weighted_belief/WBA/WBA-OAC and WBA-Orth }{
\begin{figure*}[!t]
    \centering
\begin{tikzpicture}[scale=0.6]
\begin{axis}[
xlabel={$\mathrm{SNR}$ (\si{\decibel})},
ylabel={Macro-F1},
legend pos={south east},
legend to name=conditions_legend_\plottype,
legend columns=-1,
error bars/y dir=both,
error bars/y explicit,
legend style={font=\footnotesize},
ymin=0.0,
ymax=1.00,
ytick={0,0.2,0.4,0.6,0.8,1},
]
\foreach \method in {\plottype_oac_nonprivate,\plottype_orthogonal_nonprivate,\plottype_oac_weakprivate,\plottype_orthogonal_weakprivate,\plottype_oac_private,\plottype_orthogonal_private}{
    \addplot table[x=snr, y=\method, y error=\method_std, col sep=comma]{results/conditions_snr_vs_macro_f1.csv};
}
\legend{\plotname-OAC $(\varepsilon=\infty)$,\plotname-Orth $(\varepsilon=\infty)$,\plotname-OAC $(\varepsilon=5)$,\plotname-Orth $(\varepsilon=5)$,\plotname-OAC $(\varepsilon=1)$,\plotname-Orth $(\varepsilon=1)$}
\end{axis}
\end{tikzpicture}
\begin{tikzpicture}[scale=0.6]
\begin{axis}[
xlabel={Participation probability $p$},
ylabel={Macro-F1},
legend pos={south east},
error bars/y dir=both,
error bars/y explicit,
ymin=0.0,
ymax=1.00,
ytick={0,0.2,0.4,0.6,0.8,1},
]
\foreach \method in {\plottype_oac_nonprivate,\plottype_orthogonal_nonprivate,\plottype_oac_weakprivate,\plottype_orthogonal_weakprivate,\plottype_oac_private,\plottype_orthogonal_private}{
    \addplot table[x=p, y=\method, y error=\method_std, col sep=comma]{results/conditions_p_vs_macro_f1.csv};
}
\end{axis}
\end{tikzpicture}
\begin{tikzpicture}[scale=0.6]
\begin{axis}[
xlabel={Projection dimensions $d$},
ylabel={Macro-F1},
legend pos={south east},
error bars/y dir=both,
error bars/y explicit,
ymin=0.0,
ymax=1.00,
ytick={0,0.2,0.4,0.6,0.8,1},
]
\foreach \method in {\plottype_oac_nonprivate,\plottype_orthogonal_nonprivate,\plottype_oac_weakprivate,\plottype_orthogonal_weakprivate,\plottype_oac_private,\plottype_orthogonal_private}{
    \addplot table[x=num_dims, y=\method, y error=\method_std, col sep=comma]{results/conditions_num_dims_vs_macro_f1.csv};
}
\end{axis}
\end{tikzpicture}
\resizebox{\textwidth}{!}{
    \ref*{conditions_legend_\plottype}
}
\caption{Comparison of \fulldesc with different \gls{DP} levels as a function of channel SNR (left), participation probability $p$ (middle) and projection dimensions $d$ (right) on the validation split of CIFAR-10 dataset.}
\label{fig:conditions_\plottype}
\end{figure*}
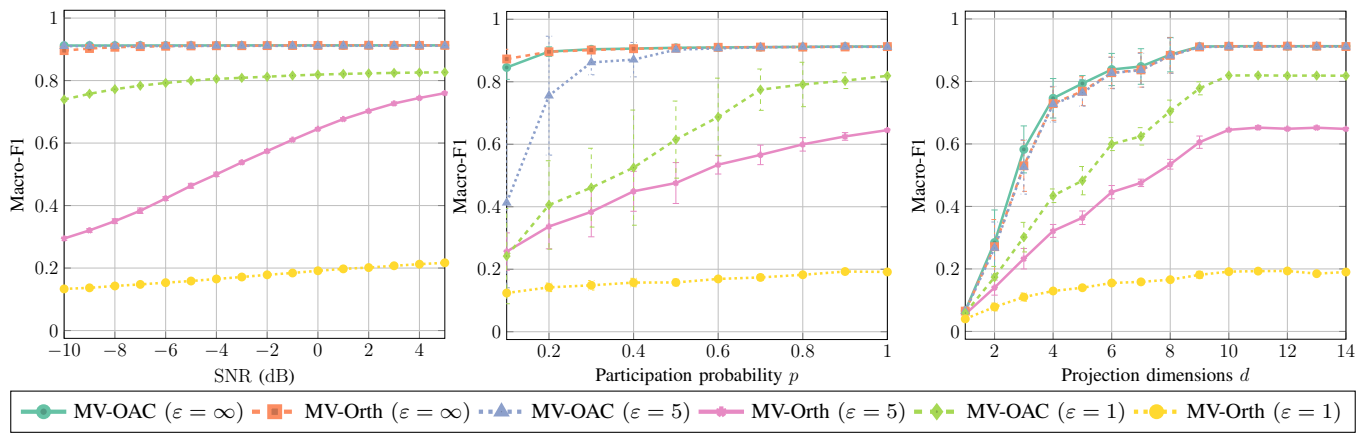
}

Here, we perform an ablation study evaluating different projection methods for private and non-private cases, which is presented in \cref{tab:ablation_projection}. We employ validation set to prevent leakage from the test set by exhaustive search for hyperparameters.

Again, \gls{OAC}-based methods outperform their orthogonal counterparts in all our experiments. Both Gaussian and Rademacher projections are among the most common in the compressed sensing literature. In Gaussian projection, we sample $\Pm$ from random normal distribution, whereas in Rademacher projection, we sample $\Pm$ from Rademacher distribution that consists of $-1$ and $1$ with equal probability. 
Orthogonal projection clearly outperforms both Gaussian and Rademacher projections for all the evaluated $\varepsilon$ values, and performs very closely to identity projection, while having the additional benefit of expanding/shrinking the bandwidth.

In the private setting, we evaluate the projection of private labels, in which we add the privacy noise to the beliefs $f_i(\xv_{i,t})$, i.e., before projection. These results are denoted by appending "of labels" to the method name, and they perform better than the private projection methods that add privacy noise after projection, which are described in~\cite{li2023differential}. This is because \gls{DP} accounts for privacy noise in the worst-case, so although projection has some randomization and distortion effect on the signal, it still tries to protect that part, and this results in worse performance.

Among all these results, orthogonal projection of labels, which is described in \cref{sec:methodology_transmission}, performs the best, and is very close to the identity projection.

As a fully digital baseline, in \cref{tab:ablation_projection}, we also report the results of \gls{RR}~\cite{warner1965randomized} as a private inference method. Since \gls{RR} is a discrete scheme, we only employ it for {\it Majority Voting}. In this technique, each client conveys its true prediction with probability $\frac{e^{\varepsilon}}{e^{\varepsilon}+k}$ and tells otherwise a lie by uniform randomly picking a wrong label. For {\it \gls{MV-OAC}}, we further benefit from shuffling although its effect is limited for $n=20$ clients~\cite{feldman2022hiding}. We note that the \gls{MAC} acts as a shuffler since signals lose the order via summation. Since \gls{RR} works for digital schemes, and it does not act as additive noise, the summation of randomized predictions from the clients does not improve the overall privacy even if we use OAC. 
\begin{table}[htbp!]
    \centering
    \caption{Ablation study of different projection methods on the validation split of CIFAR-10}
\resizebox{\columnwidth}{!}{%
        \begin{tabular}{llcc}
    \toprule
    $\varepsilon$ & Projection & MV-Orth (\%) & MV-OAC (\%)\\ \midrule
 \multirow{4}{*}{$\infty$}  & Identity Projection & $91.15 {\scriptstyle \pm 0.20}$ & $\mathbf{91.33 {\scriptstyle \pm 0.17}}$\\
 & Gaussian Projection & $68.36 {\scriptstyle \pm 22.99}$ & $87.15 {\scriptstyle \pm 8.78}$\\
 & Rademacher Projection & $34.17 {\scriptstyle \pm 39.50}$ & $39.83 {\scriptstyle \pm 46.29}$\\
 & Orthogonal Projection & $91.16 {\scriptstyle \pm 0.16}$ & $91.31 {\scriptstyle \pm 0.21}$\\
 \midrule
 \multirow{7}{*}{$1$}  & Identity Projection & $19.07 {\scriptstyle \pm 0.94}$ & $\mathbf{82.08 {\scriptstyle \pm 0.51}}$\\
 & Gaussian Projection & $8.41 {\scriptstyle \pm 1.89}$ & $20.03 {\scriptstyle \pm 7.84}$\\
 & Gaussian Projection of Labels & $9.84 {\scriptstyle \pm 2.82}$ & $58.01 {\scriptstyle \pm 18.89}$\\
 & Rademacher Projection & $7.15 {\scriptstyle \pm 2.81}$ & $13.05 {\scriptstyle \pm 11.16}$\\
 & Rademacher Projection of Labels & $7.66 {\scriptstyle \pm 3.69}$ & $29.33 {\scriptstyle \pm 32.54}$\\
 & Orthogonal Projection & $19.15 {\scriptstyle \pm 0.64}$ & $81.87 {\scriptstyle \pm 0.13}$\\
 & Orthogonal Projection of Labels & $19.13 {\scriptstyle \pm 0.58}$ & $81.91 {\scriptstyle \pm 0.31}$\\
 & Orthogonal Projection of RR & $43.70 {\scriptstyle \pm 0.98}$ & $52.81 {\scriptstyle \pm 0.52}$\\

\bottomrule
    \end{tabular}
}
    \label{tab:ablation_projection}
    \end{table}

\subsection{Analysis of Privacy Quantities}
In this section, we show the quantitative values of standard deviations for the \gls{DP} noise to be added in different scenarios. \Cref{fig:privacy_quantities} illustrates the effect of the number of users $n$ and participation probability $p$ on the privacy noise. The amount of privacy noise, measured by its variance, decreases as $\varepsilon$ increases. This is expected because higher $\varepsilon$ corresponds to lower privacy, which requires less noise to be added. As participation probability $p$ increases, we need to add more noise in total to guarantee the same level of \gls{DP} since partial participation is another source of randomness. As the number of clients $n$ increases, we need to add less noise per client to guarantee the same level of \gls{DP} since noise is distributed among the participating clients.
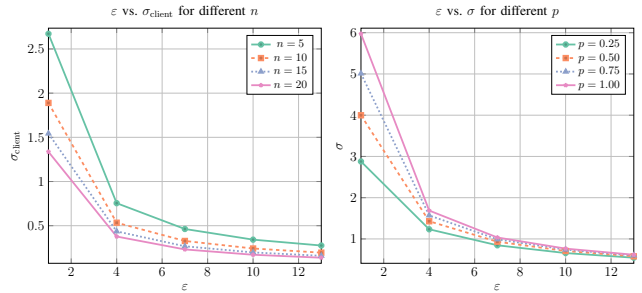
\begin{figure}[!t]
    \centering
    \begin{tikzpicture}[scale=0.43]
\begin{axis}[
title={$\varepsilon$ vs. $\sigma_\mathrm{client}$ for different $n$},
xlabel={$\varepsilon$},
ylabel={$\sigma_\mathrm{client}$},
legend pos={north east},
]
\addplot table[x=eps, y=sigma_n5, col sep=comma]{results/privacy_eps_vs_sigma_by_n.csv};
\addplot table[x=eps, y=sigma_n10, col sep=comma]{results/privacy_eps_vs_sigma_by_n.csv};
\addplot table[x=eps, y=sigma_n15, col sep=comma]{results/privacy_eps_vs_sigma_by_n.csv};
\addplot table[x=eps, y=sigma_n20, col sep=comma]{results/privacy_eps_vs_sigma_by_n.csv};
\legend{$n=5$,$n=10$,$n=15$,$n=20$}
\end{axis}
\end{tikzpicture}
\begin{tikzpicture}[scale=0.43]
\begin{axis}[
title={$\varepsilon$ vs. $\sigma$ for different $p$},
xlabel={$\varepsilon$},
ylabel={$\sigma$},
legend pos={north east},
]
\addplot table[x=eps, y=sigma_p0.25, col sep=comma]{results/privacy_eps_vs_sigma_by_p.csv};
\addplot table[x=eps, y=sigma_p0.5, col sep=comma]{results/privacy_eps_vs_sigma_by_p.csv};
\addplot table[x=eps, y=sigma_p0.75, col sep=comma]{results/privacy_eps_vs_sigma_by_p.csv};
\addplot table[x=eps, y=sigma_p1.0, col sep=comma]{results/privacy_eps_vs_sigma_by_p.csv};
\legend{$p=0.25$,$p=0.50$,$p=0.75$,$p=1.00$}
\end{axis}
\end{tikzpicture}
\caption{The standard deviation of noise for different system and privacy conditions.}
\label{fig:privacy_quantities}
\end{figure}

\subsection{Analysis of Scalability}
\begin{table*}[t]
    \centering
    \caption{Scalability analysis of the introduced methods on CIFAR-10 in terms of Macro-F1}
    \begin{tabular}{clccccc}
    \toprule
    $\varepsilon$ & Method & 5 Users & 15 Users & 20 Users & 50 Users & 100 Users\\ \midrule
\multirow{7}{*}{$\infty$} & Best Client & $86.35  {\scriptstyle \pm 0.09}$ & $83.97  {\scriptstyle \pm 0.44}$ & $83.16  {\scriptstyle \pm 0.29}$ & $68.85  {\scriptstyle \pm 1.28}$ & $71.59  {\scriptstyle \pm 1.01}$ \\ 
  & BA-Orth & $93.37  {\scriptstyle \pm 0.09}$ & $92.13  {\scriptstyle \pm 0.14}$ & $91.57  {\scriptstyle \pm 0.04}$ & $83.02  {\scriptstyle \pm 0.27}$ & $83.89  {\scriptstyle \pm 0.27}$ \\ 
  & WBA-Orth & $93.37  {\scriptstyle \pm 0.11}$ & $92.13  {\scriptstyle \pm 0.14}$ & $91.56  {\scriptstyle \pm 0.04}$ & $83.06  {\scriptstyle \pm 0.27}$ & $83.98  {\scriptstyle \pm 0.24}$ \\ 
  & MV-Orth & $93.30  {\scriptstyle \pm 0.09}$ & $92.16  {\scriptstyle \pm 0.10}$ & $91.55  {\scriptstyle \pm 0.09}$ & $82.61  {\scriptstyle \pm 0.42}$ & $83.73  {\scriptstyle \pm 0.21}$ \\ 
  & BA-OAC & $\mathbf{93.80  {\scriptstyle \pm 0.10}}$ & $92.32  {\scriptstyle \pm 0.09}$ & $91.79  {\scriptstyle \pm 0.10}$ & $83.29  {\scriptstyle \pm 0.23}$ & $83.95  {\scriptstyle \pm 0.19}$ \\ 
  & WBA-OAC & $\mathbf{93.80  {\scriptstyle \pm 0.11}}$ & $92.31  {\scriptstyle \pm 0.07}$ & $91.80  {\scriptstyle \pm 0.10}$ & $83.28  {\scriptstyle \pm 0.22}$ & $84.03  {\scriptstyle \pm 0.17}$ \\ 
  & MV-OAC & $93.62  {\scriptstyle \pm 0.06}$ & $92.28  {\scriptstyle \pm 0.11}$ & $91.66  {\scriptstyle \pm 0.06}$ & $82.92  {\scriptstyle \pm 0.26}$ & $83.74  {\scriptstyle \pm 0.17}$ \\ 
 \midrule  
 \multirow{7}{*}{$5$} & Best Client & $19.20  {\scriptstyle \pm 0.28}$ & $19.24  {\scriptstyle \pm 0.62}$ & $18.37  {\scriptstyle \pm 0.13}$ & $16.86  {\scriptstyle \pm 0.64}$ & $17.09  {\scriptstyle \pm 0.36}$ \\  
  & BA-Orth & $31.00  {\scriptstyle \pm 0.48}$ & $47.93  {\scriptstyle \pm 0.52}$ & $53.36  {\scriptstyle \pm 0.34}$ & $57.37  {\scriptstyle \pm 0.43}$ & $64.12  {\scriptstyle \pm 0.85}$ \\  
  & WBA-Orth & $30.99  {\scriptstyle \pm 0.49}$ & $47.92  {\scriptstyle \pm 0.50}$ & $53.36  {\scriptstyle \pm 0.33}$ & $57.37  {\scriptstyle \pm 0.45}$ & $64.16  {\scriptstyle \pm 0.85}$ \\  
  & MV-Orth & $35.24  {\scriptstyle \pm 0.54}$ & $57.77  {\scriptstyle \pm 0.58}$ & $65.08  {\scriptstyle \pm 0.46}$ & $67.45  {\scriptstyle \pm 0.59}$ & $79.13  {\scriptstyle \pm 0.50}$ \\  
  & BA-OAC & $78.91  {\scriptstyle \pm 0.47}$ & $91.74  {\scriptstyle \pm 0.16}$ & $91.40  {\scriptstyle \pm 0.15}$ & $83.07  {\scriptstyle \pm 0.21}$ & $83.92  {\scriptstyle \pm 0.20}$ \\  
  & WBA-OAC & $78.91  {\scriptstyle \pm 0.45}$ & $91.75  {\scriptstyle \pm 0.15}$ & $91.39  {\scriptstyle \pm 0.14}$ & $83.09  {\scriptstyle \pm 0.20}$ & $84.03  {\scriptstyle \pm 0.21}$ \\  
  & MV-OAC & $85.36  {\scriptstyle \pm 0.22}$ & $\mathbf{92.14  {\scriptstyle \pm 0.11}}$ & $91.60  {\scriptstyle \pm 0.07}$ & $82.82  {\scriptstyle \pm 0.26}$ & $83.74  {\scriptstyle \pm 0.15}$ \\  
 \midrule  
 \multirow{7}{*}{$1$} & Best Client & $11.97  {\scriptstyle \pm 0.16}$ & $11.93  {\scriptstyle \pm 0.43}$ & $11.55  {\scriptstyle \pm 0.12}$ & $11.32  {\scriptstyle \pm 0.23}$ & $11.27  {\scriptstyle \pm 0.29}$ \\  
  & BA-Orth & $13.78  {\scriptstyle \pm 0.30}$ & $16.18  {\scriptstyle \pm 0.19}$ & $17.21  {\scriptstyle \pm 0.26}$ & $18.25  {\scriptstyle \pm 0.39}$ & $20.86  {\scriptstyle \pm 0.37}$ \\  
  & WBA-Orth & $13.78  {\scriptstyle \pm 0.30}$ & $16.18  {\scriptstyle \pm 0.18}$ & $17.21  {\scriptstyle \pm 0.27}$ & $18.25  {\scriptstyle \pm 0.39}$ & $20.86  {\scriptstyle \pm 0.36}$ \\  
  & MV-Orth & $14.44  {\scriptstyle \pm 0.31}$ & $17.63  {\scriptstyle \pm 0.06}$ & $19.31  {\scriptstyle \pm 0.14}$ & $21.12  {\scriptstyle \pm 0.40}$ & $29.36  {\scriptstyle \pm 0.28}$ \\  
  & BA-OAC & $22.46  {\scriptstyle \pm 0.38}$ & $58.45  {\scriptstyle \pm 0.64}$ & $71.14  {\scriptstyle \pm 0.37}$ & $78.34  {\scriptstyle \pm 0.45}$ & $82.34  {\scriptstyle \pm 0.39}$ \\  
  & WBA-OAC & $22.47  {\scriptstyle \pm 0.37}$ & $58.47  {\scriptstyle \pm 0.65}$ & $71.14  {\scriptstyle \pm 0.37}$ & $78.34  {\scriptstyle \pm 0.40}$ & $82.41  {\scriptstyle \pm 0.39}$ \\  
  & MV-OAC & $25.20  {\scriptstyle \pm 0.27}$ & $69.74  {\scriptstyle \pm 0.51}$ & $82.43  {\scriptstyle \pm 0.33}$ & $81.05  {\scriptstyle \pm 0.45}$ & $\mathbf{83.52  {\scriptstyle \pm 0.22}}$ \\  
 \bottomrule
\end{tabular}
\label{tab:ablation_numusers}
\end{table*}
\Cref{tab:ablation_numusers} demonstrates the scalability of the proposed method on CIFAR-10 dataset for various number of clients. As the number of clients increases, the performance of all methods tends to decline, primarily due to the assumption that the dataset is split into non-overlapping portions among different clients. This leads to less diverse data at each user, reducing the effectiveness of individual models in capturing the overall data distribution. The {\it Best Client} method, which relies on the model of a single user, performs well initially but experiences a sharp decline in Macro-F1 scores, particularly as privacy constraints are introduced. For instance, at $\varepsilon = 5$, its performance drops from 19.20 with 5 clients to 17.09 with 100 clients, and similarly at $\varepsilon = 1$, from 11.97 to 11.27. This demonstrates the method's limited scalability, especially in scenarios where privacy is essential and the number of clients grows.

In contrast, \gls{OAC} Methods--{\it BA-OAC}, {\it WBA-OAC}, and {\it MV-OAC}--exhibit better scalability, especially under privacy constraints. These methods leverage the superposition property of multiple access channels, allowing them to efficiently aggregate information from users without needing orthogonal communication channels. Even with 100 users and at the strictest privacy level ($\varepsilon = 1$), {\it MV-OAC} still maintains a high Macro-F1 score of 83.52. This makes \gls{OAC} methods more resilient to increasing numbers of users and privacy requirements compared to orthogonal methods, which experience more significant performance drops under the same conditions. Overall, OAC-based methods provide a more scalable and privacy-efficient solution for distributed inference in large-scale, privacy-sensitive environments.

\subsection{Statistical Significance of the Results}

In this section, we demonstrate the statistical significance of the results using a non-parametric test.

We first perform Friedman test and obtain p-value less than $0.05$, which means that distributions are significantly different. We then proceed with the post-hoc Nemenyi test at $0.05$ significance level, in which we found the \gls{CD} as \num{1.42}, which is the minimum difference to have between two methods for statistical distinguishability. We can reject the null hypothesis for the models with an average rank larger than the \gls{CD}, i.e., it means these models are drawn from significantly different distributions.

The top plot in \cref{fig:nemenyi} shows the \gls{CD} diagram for the non-private case, $\varepsilon=\infty$. It shows that {\it \gls{BA-OAC}} and {\it \gls{WBA-OAC}} perform significantly better than all the other methods. The second and the third plots in \cref{fig:nemenyi} show the \gls{CD} diagram for the moderately private $(\varepsilon=5)$ and strongly private $(\varepsilon=1)$ cases, respectively. In both, {\it \gls{MV-OAC}} becomes the method with the highest rank due to the inclusion of privacy noise, and all the \gls{OAC}-based methods perform significantly better than other alternatives. There is also no significant difference among orthogonal methods and among \gls{OAC}-based methods. In all three scenarios, the {\it Best Client} method performs the poorest due to the performance improvement of ensembling.

\begin{figure}
\centering
\begin{tikzpicture}
\node (Label) at (1.4674368378252352, 0.7){\tiny{CD = 1.42}}; 
\node (Label) at (3.5, 0.7){$\varepsilon=\infty$};
\draw[decorate,decoration={snake,amplitude=.4mm,segment length=1.5mm,post length=0mm},very thick, color = black] (0.8571428571428571,0.5) -- (2.077730818507613,0.5);
\foreach \x in {0.8571428571428571, 2.077730818507613} \draw[thick,color = black] (\x, 0.4) -- (\x, 0.6);
 
\draw[gray, thick](0.8571428571428571,0) -- (6.0,0);
\foreach \x in {0.8571428571428571,1.7142857142857142,2.5714285714285716,3.4285714285714284,4.285714285714286,5.142857142857143,6.0} \draw (\x cm,1.5pt) -- (\x cm, -1.5pt);
\node (Label) at (0.8571428571428571,0.2){\tiny{1}};
\node (Label) at (1.7142857142857142,0.2){\tiny{2}};
\node (Label) at (2.5714285714285716,0.2){\tiny{3}};
\node (Label) at (3.4285714285714284,0.2){\tiny{4}};
\node (Label) at (4.285714285714286,0.2){\tiny{5}};
\node (Label) at (5.142857142857143,0.2){\tiny{6}};
\node (Label) at (6.0,0.2){\tiny{7}};
\draw[decorate,decoration={snake,amplitude=.4mm,segment length=1.5mm,post length=0mm},very thick, color = black](1.4178571428571427,-0.25) -- (1.5821428571428573,-0.25);
\draw[decorate,decoration={snake,amplitude=.4mm,segment length=1.5mm,post length=0mm},very thick, color = black](3.121428571428572,-0.4) -- (3.842857142857142,-0.4);
\draw[decorate,decoration={snake,amplitude=.4mm,segment length=1.5mm,post length=0mm},very thick, color = black](3.7428571428571424,-0.55) -- (4.764285714285714,-0.55);
\node (Point) at (1.4678571428571427, 0){};\node (Label) at (0.25,-0.8500000000000001){\scriptsize{BA-OAC}}; \draw (Point) |- (Label);
\node (Point) at (1.5321428571428573, 0){};\node (Label) at (0.25,-1.1500000000000001){\scriptsize{WBA-OAC}}; \draw (Point) |- (Label);
\node (Point) at (3.1714285714285717, 0){};\node (Label) at (0.25,-1.4500000000000002){\scriptsize{BA-Orth}}; \draw (Point) |- (Label);
\node (Point) at (6.0, 0){};\node (Label) at (6.75,-0.8500000000000001){\scriptsize{Best Client}}; \draw (Point) |- (Label);
\node (Point) at (4.714285714285714, 0){};\node (Label) at (6.75,-1.1500000000000001){\scriptsize{MV-Orth}}; \draw (Point) |- (Label);
\node (Point) at (3.7928571428571423, 0){};\node (Label) at (6.75,-1.4500000000000002){\scriptsize{MV-OAC}}; \draw (Point) |- (Label);
\node (Point) at (3.3214285714285716, 0){};\node (Label) at (6.75,-1.75){\scriptsize{WBA-Orth}}; \draw (Point) |- (Label);
\end{tikzpicture}
\begin{tikzpicture}
\node (Label) at (1.4674368378252352, 0.7){\tiny{CD = 1.42}}; 
\node (Label) at (3.5, 0.7){$\varepsilon=5$};
\draw[decorate,decoration={snake,amplitude=.4mm,segment length=1.5mm,post length=0mm},very thick, color = black] (0.8571428571428571,0.5) -- (2.077730818507613,0.5);
\foreach \x in {0.8571428571428571, 2.077730818507613} \draw[thick,color = black] (\x, 0.4) -- (\x, 0.6);
 
\draw[gray, thick](0.8571428571428571,0) -- (6.0,0);
\foreach \x in {0.8571428571428571,1.7142857142857142,2.5714285714285716,3.4285714285714284,4.285714285714286,5.142857142857143,6.0} \draw (\x cm,1.5pt) -- (\x cm, -1.5pt);
\node (Label) at (0.8571428571428571,0.2){\tiny{1}};
\node (Label) at (1.7142857142857142,0.2){\tiny{2}};
\node (Label) at (2.5714285714285716,0.2){\tiny{3}};
\node (Label) at (3.4285714285714284,0.2){\tiny{4}};
\node (Label) at (4.285714285714286,0.2){\tiny{5}};
\node (Label) at (5.142857142857143,0.2){\tiny{6}};
\node (Label) at (6.0,0.2){\tiny{7}};
\draw[decorate,decoration={snake,amplitude=.4mm,segment length=1.5mm,post length=0mm},very thick, color = black](1.1285714285714286,-0.25) -- (2.0964285714285715,-0.25);
\draw[decorate,decoration={snake,amplitude=.4mm,segment length=1.5mm,post length=0mm},very thick, color = black](3.6357142857142857,-0.4) -- (4.657142857142857,-0.4);
\node (Point) at (1.1785714285714286, 0){};\node (Label) at (0.25,-0.65){\scriptsize{MV-OAC}}; \draw (Point) |- (Label);
\node (Point) at (1.9178571428571427, 0){};\node (Label) at (0.25,-0.95){\scriptsize{BA-OAC}}; \draw (Point) |- (Label);
\node (Point) at (2.0464285714285717, 0){};\node (Label) at (0.25,-1.25){\scriptsize{WBA-OAC}}; \draw (Point) |- (Label);
\node (Point) at (6.0, 0){};\node (Label) at (6.75,-0.65){\scriptsize{Best Client}}; \draw (Point) |- (Label);
\node (Point) at (4.607142857142857, 0){};\node (Label) at (6.75,-0.95){\scriptsize{WBA-Orth}}; \draw (Point) |- (Label);
\node (Point) at (4.564285714285715, 0){};\node (Label) at (6.75,-1.25){\scriptsize{BA-Orth}}; \draw (Point) |- (Label);
\node (Point) at (3.6857142857142855, 0){};\node (Label) at (6.75,-1.5499999999999998){\scriptsize{MV-Orth}}; \draw (Point) |- (Label);
\end{tikzpicture}
\begin{tikzpicture}
\node (Label) at (1.4674368378252352, 0.7){\tiny{CD = 1.42}}; 
\node (Label) at (3.5, 0.7){$\varepsilon=1$};
\draw[decorate,decoration={snake,amplitude=.4mm,segment length=1.5mm,post length=0mm},very thick, color = black] (0.8571428571428571,0.5) -- (2.077730818507613,0.5);
\foreach \x in {0.8571428571428571, 2.077730818507613} \draw[thick,color = black] (\x, 0.4) -- (\x, 0.6);
 
\draw[gray, thick](0.8571428571428571,0) -- (6.0,0);
\foreach \x in {0.8571428571428571,1.7142857142857142,2.5714285714285716,3.4285714285714284,4.285714285714286,5.142857142857143,6.0} \draw (\x cm,1.5pt) -- (\x cm, -1.5pt);
\node (Label) at (0.8571428571428571,0.2){\tiny{1}};
\node (Label) at (1.7142857142857142,0.2){\tiny{2}};
\node (Label) at (2.5714285714285716,0.2){\tiny{3}};
\node (Label) at (3.4285714285714284,0.2){\tiny{4}};
\node (Label) at (4.285714285714286,0.2){\tiny{5}};
\node (Label) at (5.142857142857143,0.2){\tiny{6}};
\node (Label) at (6.0,0.2){\tiny{7}};
\draw[decorate,decoration={snake,amplitude=.4mm,segment length=1.5mm,post length=0mm},very thick, color = black](0.9357142857142856,-0.25) -- (2.1392857142857142,-0.25);
\draw[decorate,decoration={snake,amplitude=.4mm,segment length=1.5mm,post length=0mm},very thick, color = black](3.5500000000000007,-0.4) -- (4.7214285714285715,-0.4);
\node (Point) at (0.9857142857142857, 0){};\node (Label) at (0.25,-0.65){\scriptsize{MV-OAC}}; \draw (Point) |- (Label);
\node (Point) at (2.067857142857143, 0){};\node (Label) at (0.25,-0.95){\scriptsize{BA-OAC}}; \draw (Point) |- (Label);
\node (Point) at (2.0892857142857144, 0){};\node (Label) at (0.25,-1.25){\scriptsize{WBA-OAC}}; \draw (Point) |- (Label);
\node (Point) at (5.957142857142857, 0){};\node (Label) at (6.75,-0.65){\scriptsize{Best Client}}; \draw (Point) |- (Label);
\node (Point) at (4.671428571428572, 0){};\node (Label) at (6.75,-0.95){\scriptsize{WBA-Orth}}; \draw (Point) |- (Label);
\node (Point) at (4.628571428571429, 0){};\node (Label) at (6.75,-1.25){\scriptsize{BA-Orth}}; \draw (Point) |- (Label);
\node (Point) at (3.6000000000000005, 0){};\node (Label) at (6.75,-1.5499999999999998){\scriptsize{MV-Orth}}; \draw (Point) |- (Label);
\end{tikzpicture}
\caption{Comparison of the methods for different privacy levels $(\varepsilon=\infty,\varepsilon=5,\varepsilon=1)$, respectively, in terms of average rank with the post-hoc Nemenyi test using a critical difference diagram. Connected methods are not significantly different at the $0.05$ significance level.}
\label{fig:nemenyi}
\end{figure}
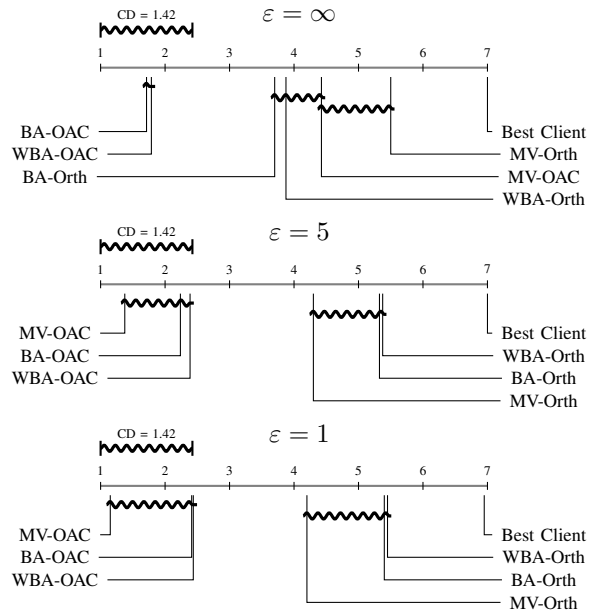

\subsection{Analog vs Digital Implementation}

In this section, we comment on the possibility of adapting the introduced methods in \cref{sec:comparison_baselines,sec:ablation_privacy_projection} to the digital setting.

In the non-private case, {\it Majority Voting} can be directly implemented digitally since we use a predefined code $\Pm$. Again, in the non-private case, {\it Belief Averaging}-based methods can be implemented by quantizing the beliefs, and {\it Weighted Belief Averaging}-based methods can be implemented after quantizing the weighted beliefs. These methods cannot be directly implemented in the cases with privacy noise, i.e., in the cases when $\varepsilon \in \LP 1, 5 \RP$. In the weak-private and private cases, {\it Orthogonal Projection of RR}, introduced in \cref{sec:ablation_privacy_projection}, can be implemented digitally since privacy noise is digital as well.

\begin{remark}
For simplicity, we employ analog \gls{OAC} in our system model and focus on the private collaborative inference task. To enable compatibility with digital communication systems, the digital \gls{OAC} frameworks proposed in \cite{razavikia2023channelcomp,csahin2023over,qiao2024massive} provide a strong foundational basis. We leave this adaptation for future exploration and encourage readers to consult the survey papers \cite{perez2024waveforms, csahin2023survey} for additional perspectives.
\end{remark}


\subsection{Practical Considerations}
\textbf{Synchronization and Mobility:} In practical deployments, achieving perfect time and frequency synchronization among a large number of devices is challenging due to factors like clock drift, network delays, hardware limitations, and device mobility~\cite{csahin2023survey}. As devices move during the data consensus process, changes in network topology and neighborhood relationships can occur, which may impact system performance by introducing additional asynchrony and communication delays. By incorporating participation probability into our method, we not only improve privacy guarantees but also model the impact of asynchrony and device mobility. This approach provides a practical assessment of the proposed method's resilience to dynamic network conditions, offering insights into its effectiveness in real-world scenarios where perfect synchronization and static network topologies are unattainable.

\textbf{Instantaneous \gls{CSI}}:
In this work, we assume each client has access to its instantaneous \gls{CSI} (channel gain), enabling transmission power scaling for optimal signal aggregation at the inference server. However, acquiring real-time \gls{CSI} can introduce delays and increase signaling overhead, especially in high-mobility or large-scale networks. To address this, recent research explores blind \gls{OAC} methods that bypass per-user instantaneous \gls{CSI} by leveraging statistical channel information or adaptive transmission strategies.

Blind \gls{OAC} designs~\cite{amiri2021blind,csahin2023survey} aim to reduce signaling overhead and latency by eliminating the need for real-time \gls{CSI} exchange. Instead, these methods rely on techniques such as power control (adjusting transmission power based on long-term channel statistics or pre-configured heuristics), fixed modulation schemes (using predetermined modulation formats independent of real-time channel conditions), or statistical estimations based on historical channel data. While our current approach leverages \gls{CSI} for accurate signal alignment, blind \gls{OAC} strategies offer a compelling solution for dense or high-mobility networks, where signaling delays can severely impact performance. Exploring how to adapt our method to incorporate blind \gls{OAC} principles is an exciting direction for future research.

\textbf{Energy Efficiency:} Our framework includes a channel gain threshold for optimizing transmission power, ensuring that clients in poor channel conditions reduce their energy expenditure. However, as the probability of non-participation due to poor channel gains is extremely low, the threshold has minimal impact on overall client participation. This design effectively balances energy efficiency with fairness, allowing nearly all clients to participate equitably in the inference process without significantly affecting power constraints. Additionally, in our system model, we assume that \gls{iid} $h_{i,t}$'s are \gls{iid}, in which case the participation and performance remains fair. Thus, our approach ensures sustainable operation across \gls{IoT} networks, maintaining high client participation while minimizing energy usage. For a deeper exploration on energy efficiency, we refer the reader to~\cite{wang2024over}.

\textbf{Common Randomness:} In our method, clients use a pseudorandom seed to determine participation based on a shared common randomness, eliminating the need for inter-client communication. This setup is particularly beneficial in preserving privacy as it reduces communication overhead and potential exposure of client status or model information. However, we recognize that under non-ideal conditions, achieving exact synchronization is challenging. To mitigate this, our model incorporates a probabilistic participation parameter $p$ that approximates asynchronous conditions by probabilistically excluding clients from inference rounds. This probabilistic exclusion serves as a proxy for potential synchronization errors, thus modeling the impact of communication delays or asynchrony that may prevent clients from transmitting in real-time. These approaches, combined with our probabilistic participation model, enhance our framework’s adaptability to dynamic network conditions, such as mobility and varying signal quality, ensuring robustness even when perfect synchronization cannot be achieved.
\section{Conclusion}
\label{sec:conclusion}
We have introduced a private collaborative inference framework at the wireless edge. We have exploited \gls{OAC} for bandwidth-efficient and accurate multi-class classification for multiple views while preserving the privacy of the local models and their data.  We have provided DP guarantees exploiting both distributed noise addition after projection and random participation. We have systematically evaluated the introduced methods with OAC and shown that distributed edge inference with OAC performs significantly better than its orthogonal counterpart while using fewer resources. We have observed that while transmitting local beliefs from each client is more informative, enforcing and transmitting hard local decisions can be more reliable when noise is introduced to guarantee privacy.

\balance
\bibliographystyle{IEEEtran}
\bibliography{main}

\begin{IEEEbiography}[{\includegraphics[width=1in,height=1.25in,clip,keepaspectratio]{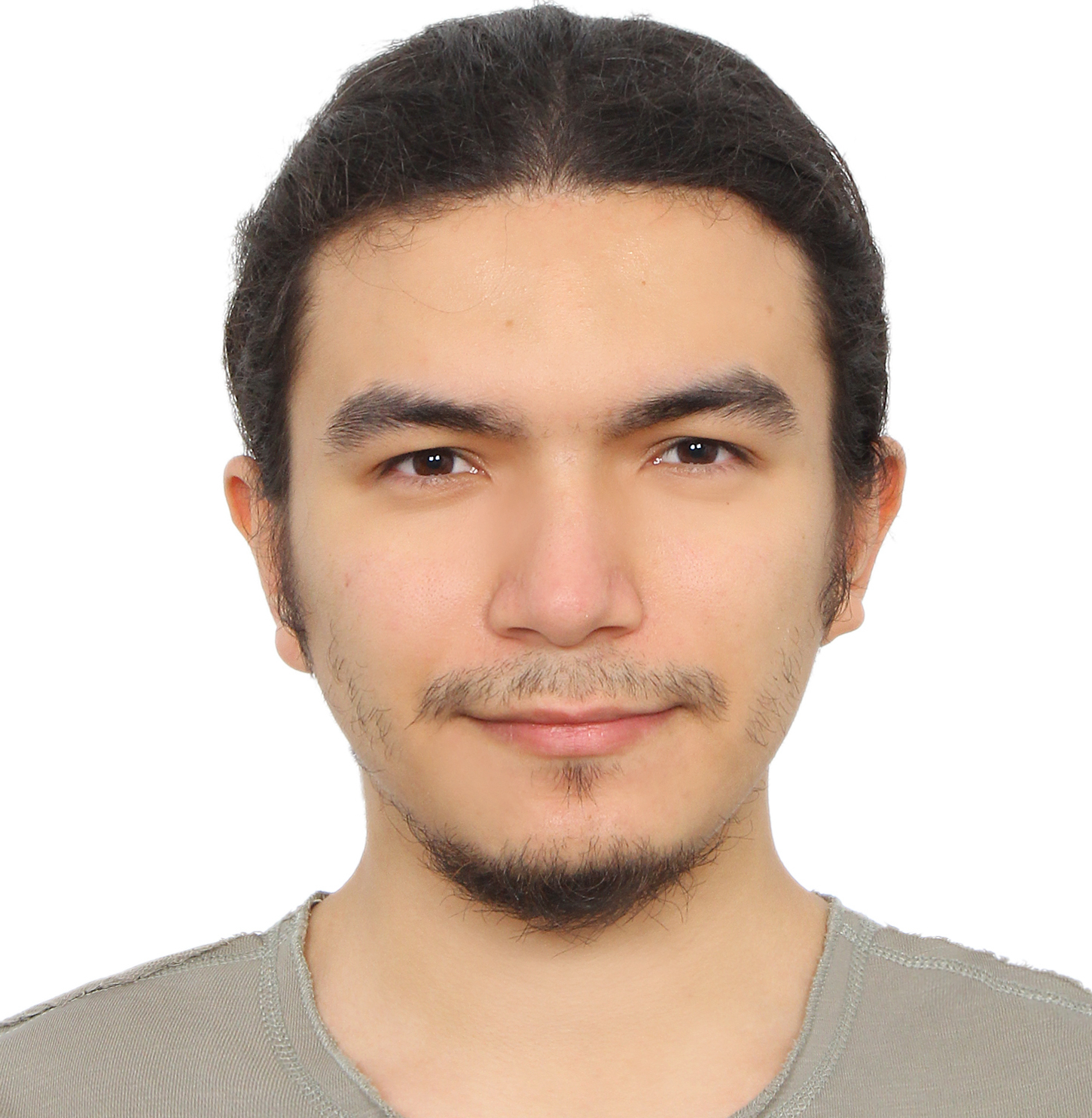}}]{Selim F. Yilmaz} received the B.S. in computer engineering and M.S. degree in electrical and electronic engineering from Bilkent University (Türkiye), in 2019 and 2021, respectively.
He is currently a PhD student at the Dept. of Electrical and Electronic Engineering of Imperial College London (United Kingdom). His main research interests include edge learning and inference, and source-channel coding.
\end{IEEEbiography}

\begin{IEEEbiography}[{\includegraphics[width=1in,height=1.25in,clip,keepaspectratio]{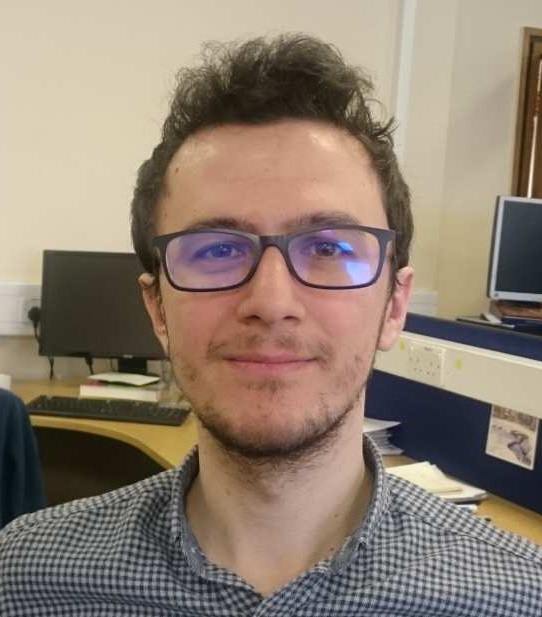}}]{Burak Hasırcıoğlu}
 received his B.Sc. degree in Electrical and Electronics Engineering from Middle East Technical University (METU), Ankara, Turkey in 2014, M.Sc. degree in Communication Systems from Ecole Polytechnique Fédérale de Lausanne (EPFL), Switzerland in 2017, and Ph.D. degree in Electrical and Electronic Engineering from Imperial College London, UK in 2024. He is currently a research associate at The Alan Turing Institute, in London, UK. His research interests include large language models, cybersecurity, privacy, federated learning and coded computing.
\end{IEEEbiography}

\begin{IEEEbiography}[{\includegraphics[width=1in,height=1.25in,clip,keepaspectratio]{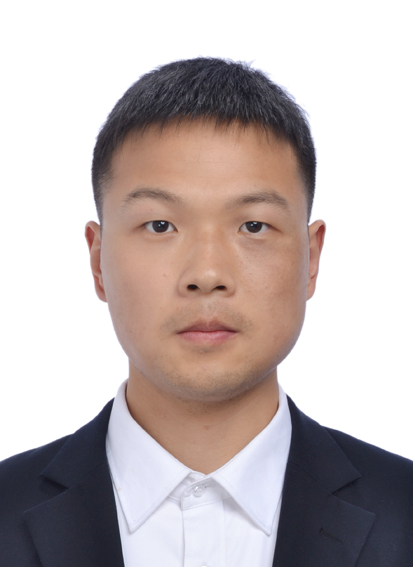}}]{Li Qiao}
(Graduate Student Member, IEEE) received the B.Sc. degree from the Beijing Institute of Technology in 2019, where he is currently pursuing the Ph.D. degree with the School of Information and Electronics. He is a collaborative Ph.D. Student with the 5GIC/6GIC Center, University of Surrey, and was a Visiting Student with the IPC Laboratory, Imperial College London. His current research interests include generative semantic communications, massive communications, and federated edge intelligence.
\end{IEEEbiography}

\begin{IEEEbiography}[{\includegraphics[width=1in,height=1.25in,clip,keepaspectratio]{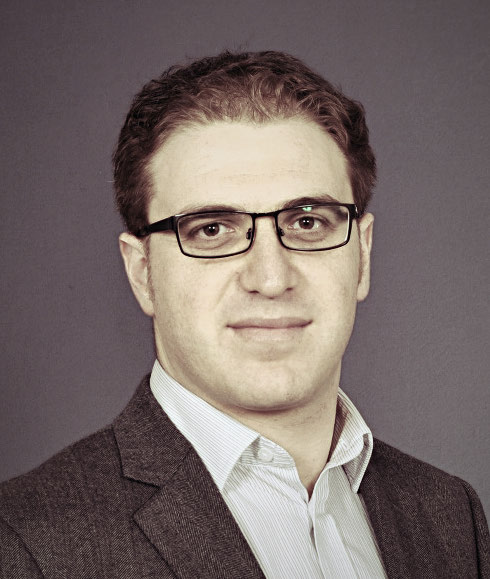}}]{Deniz Gündüz}
(Fellow, IEEE) received the B.S. degree in electrical and electronics engineering from METU, Ankara, Turkey, in 2002, and the M.S. and Ph.D. degrees in electrical engineering from the NYU Tandon School of Engineering, Brooklyn, NY, USA, in 2004 and 2007, respectively. He was a Research Assistant Professor with Stanford University, Palo Alto, CA, USA, from 2007 to 2009. He was a Postdoctoral Researcher from 2007 to 2009 and a Visiting Researcher from 2009 to 2011 with Princeton University, Princeton, NJ, USA. He was a Research Associate with the Centre Tecnologic de Telecomunicacions de Catalunya (CTTC), Barcelona, Spain, from 2009 to 2012. In 2012, he joined the Electrical and Electronic Engineering Department, Imperial College London, London, U.K., where he is currently a Professor of information processing and serves as the Deputy Head of the Intelligent Systems and Networks Group. He was a Visiting Professor with the University of Padova, Padua, Italy, in 2018 and 2020. He was a Part-Time Faculty Member with the University of Modena and Reggio Emilia, Modena, Italy, from 2019 to 2022. He has co-authored several award-winning articles, including the IEEE Communications Society—Young Author Best Paper Award in 2022 and the IEEE International Conference on Communications Best Paper Award in 2023. His research interests lie in the areas of communications, information theory, machine learning, and privacy. Dr. Gündüz is an Elected Member of the IEEE Signal Processing Society Signal Processing for Communications and Networking (SPCOM) and Machine Learning for Signal Processing (MLSP) Technical Committees, and has been serving as the Chair for the IEEE Information Theory Society U.K. and Ireland Chapter since 2022. He serves as an Area Editor for IEEE TRANSACTIONS ON INFORMATION THEORY and IEEE TRANSACTIONS ON COMMUNICATIONS. He was a recipient of the IEEE Communications Society—Communication Theory Technical Committee (CTTC) Early Achievement Award in 2017, and Starting (2016), Consolidator (2022), and Proof-of-Concept (2023) Grants of the European Research Council (ERC). He received the Imperial College London—President’s Award for Excellence in Research Supervision in 2023.
\end{IEEEbiography}

\end{document}